\author{Paper ID: }
\date{December 2019}
 \providecommand{\norm}[1]{\lVert#1\rVert}
\let\proof\relax
\let\endproof\relax
\newcommand{\rebut}[1]{\textcolor{black}{{}#1}}
\newcommand{\rebutf}[1]{\textcolor{black}{{}#1}}
\DeclareMathOperator*{\argmax}{arg\,max}
\title{
CoCo Games: Graphical Game-Theoretic Swarm Control for Communication-Aware Coverage
}
\author{Malintha~Fernando, Ransalu~Senanayake, Martin~Swany
\thanks{Malintha Fernando and Martin Swany are with the Luddy School of Informatics, Computing, and Engineering  at Indiana University, Bloomngton, IN, 47401, USA. Ransalu Senanayake is with Stanford University, CA, 94305, USA. E-mail:{\tt\small \{ccfernan, swany\} @iu.edu, ransalu@stanford.edu}.}
\thanks{Code implementations and the video demonstrations for this paper can be found at: \url{https://malintha.github.io/coco}.}
\thanks{Digital Object Identifier (DOI): 10.1109/LRA.2022.3160968}
}
\theoremstyle{definition}
\newtheorem{definition}{Definition}
\newtheorem{theorem}{Theorem}
\newtheorem{remark}{Remark}
\begin{document}

\maketitle
\begin{abstract}
We propose a novel framework for real-time communication-aware coverage control in networked robot swarms.
Our framework unifies the robot dynamics with network-level message-routing to reach consensus on swarm formations in the presence of communication uncertainties by leveraging local information.
\rebutf{Specifically, we formulate the communication-aware coverage as a cooperative graphical game, and use variational inference to reach mixed strategy Nash equilibria of the stage games.} 
We experimentally validate the proposed approach in a mobile ad-hoc wireless network scenario using teams of aerial vehicles and terrestrial user equipment (UE) operating over a large geographic region of interest.
We show that our approach can provide wireless coverage to stationary and mobile UEs under realistic network conditions.
Find the video demonstrations at \url{https://youtu.be/kQJbc_s4ZLI}.
\end{abstract}
\begin{IEEEkeywords}
Distributed Robot Systems, Networked Robots, Cooperating Robots
\end{IEEEkeywords}
\section{Introduction}

\IEEEPARstart{M}{ulti-robot} systems have been gaining significant attention in interdisciplinary research areas such as wireless networks and environmental monitoring thanks to the recent advancements in robotics and telecommunication sectors \cite{mozaffari2019tutorial}.
Specifically, mobile \textit{ad-hoc} wireless networks are emerging as a disruptive technology to accommodate on-demand coverage and capacity enhancement for networked robot systems \cite{sharma2016uav, skorobogatov2020multiple}.
However, deploying robots in such applications requires overcoming numerous challenges; maintaining the connectivity, maximizing the network coverage, and performing real-time motion planning with limited global information, to name a few.

We propose a novel game-theoretical approach to maximize the coverage over a geographical region of interest (ROI) under practical network constraints in real-time.
\rebutf{Specifically, we formulate the communication-aware coverage as a \textit{general-sum graphical game} where the robots aim to maximize two auxiliary objectives: 1) coverage, and 2) connectivity, by leveraging local information.}
The general-sum games permit each robot's payoff to arbitrarily relate, thus making it an ideal framework for coordinating swarms with interconnected objectives like ours.
\rebutf{Additionally, the graphical game-theoretic foundation underpins using the robots' local information to reach swarm consensus.}
Compared to more conventional stochastic games framework which require one's payoff to depend on the joint action profile of \textit{all} the others, this greatly simplifies the game structure \cite{kearns2001graphical}.
\begin{figure}
\centering
        \includegraphics[trim={1.5cm 2cm 2.5cm 4.9cm}, clip, width=0.41\textwidth]{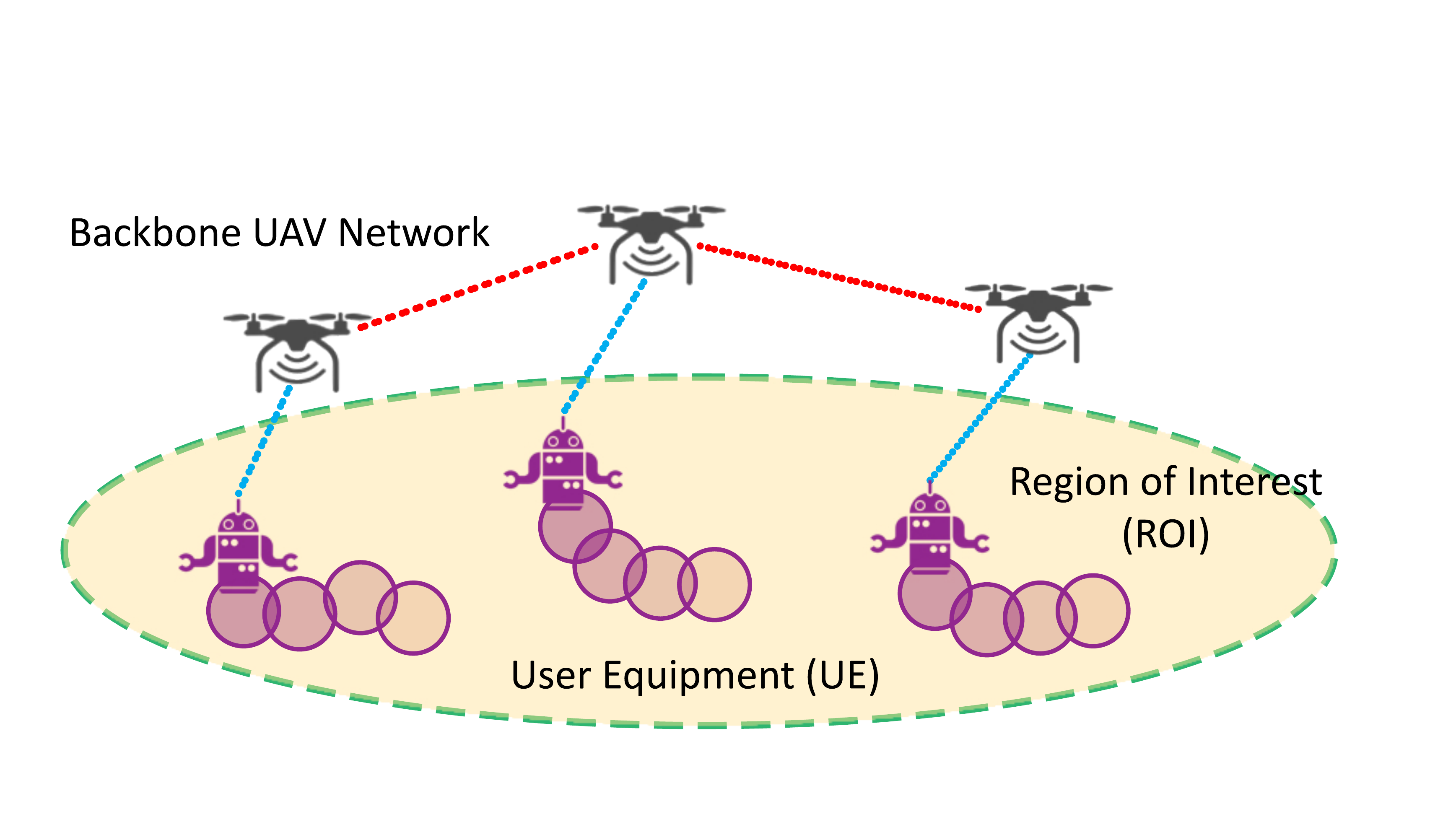}
        \caption{A swarm of 3 UAV robots providing wireless network coverage to a team of 3 travelling UEs. The ROI, UAV-UAV and UAV-UE communication linkes are represented by the ellipsoid, red and blue dashed lines, respectively. 
        CoCo games can coordinate the UAV swarm in real time to maximize the coverage for the ROI independent of the UE movements.}
        \label{fig:cover}
         \vspace{-10pt}
\end{figure}

\rebut {To account for often changing communication topology caused by network uncertainties, we routinely update the game structure with network-level information from message-routing tables.
In contrast to disk-based coverage schemes, this renders our approach highly robust to the volatility of the communication topology caused by the robots' movements and signal attenuation.
While many swarm control methods require aggregating the global information \cite{zhu2019distributed, csenbacslar2019robust}, the local neighborhood property of graphical games allow us to selectively integrate partial observations into decision-making, reducing the communication overhead.
Thus, we believe graphical games allow designing more effective control paradigms for large-scale robot swarms, where the global state aggregation is intractable due to communication limitations.}

By using variational inference (VI), we substantiate the interplay between the game and an adjoined Markov random field (MRF), whose posterior distribution resembles the solution of the former.
We experimentally validate our approach in a mobile wireless network scenario with an Unmanned Aerial Vehicle (UAV) team to provide coverage for a set of User Equipment (UE) over an ROI (Fig. \ref{fig:cover}).

The main contributions of our work are, 1) Formulation of the communication-aware coverage as a graphical game, 2) Theoretical guarantees for the stage-game's equilibrium by leveraging variational inference, 3) A scalable algorithm to reach the equilibrium consensus, and 4) Experimental results for the proposed approach under realistic network conditions.

\section{Related Works}
\subsection{Multi-robot Coverage}
The coverage problem typically involves deploying a set of mobile nodes over a field to maximize some objective: wireless signal coverage -- in case of wireless routers or, information gain -- in case of sensors \cite{li2005distributed, schwager2009decentralized}.
Myriad literature discusses coordinating wireless nodes over the spectrum of communication and control methods ranging from disk-based channel models with centralized control to stochastic models with decentralized control.
A widely known array of work uses disk graph-based methods to coordinate multiple robots while maintaining the overall network connectivity as the nodes move \cite{ji2007distributed, notarstefano2006maintaining, yang2010decentralized}. 
Further, disk-based channel models have also been used for coverage controlling in \cite{notarstefano2006maintaining, yang2010decentralized, paraskevas2016distributed, etancelin2019dacyclem}. 
However, most of them overlook the volatility of the network topology caused by the stochasticity of wireless signals.
Additionally, the ``disk" assumption enforces excessively restrictive control on the robots to maintain the local connectivity, sacrificing the coverage gain.

In \cite{schwager2009decentralized, kantaros2016distributed} authors present optimization-based decentralized coverage control for networked robot teams. 
The former approach mainly relies on a static coverage function and fails to adjust to dynamic ROIs.
\rebutf{In the latter, Kantaros et al. proposes optimizing similar auxiliary objectives to ours, considering message-routing and fixed UEs.
However, our work differs in its ability to cater to both fixed and moving UEs while eliminating the need to incorporate UE positions into the optimization problem explicitly.
By using the team abstraction proposed in \cite{belta2004abstraction}, we make our approach highly scalable in the number of UEs.}
In \cite{mox2020mobile, yan2012robotic} the authors combine the stochastic channel models to account for the  \textit{wireless fading} effect in point-to-point communication to find the optimum router configurations under different routing algorithms.

\subsection{Graphical Games}
Graphical games reflect the notion that a multi-player game can be succinctly represented by a graph, and a player's payoff only depends on its neighbors' actions \cite{kearns2001graphical}.
In \cite{daskalakis2006computing, kakade2003correlated, ortiz2020correlated} the authors established the interplay between games' solutions and probabilistic graphical models.
Although the notion of graphical games resembles that of collective dynamics-based approaches, only a few attempts have been made to employ the framework in swarm coordination, despite the success gained by the latter.
In our previous work \cite{9560899}, we presented an MRF-based approach to steer a robot swarm to a flocking consensus, yet the theoretical guarantees and the connection to graphical games were missing.


In \cite{ai2008optimality, paraskevas2016distributed} the authors proposed graphical game-theoretic methods for distributed mobile sensor coverage by conserving energy; however, the works assume stationary neighborhoods while overlooking the robot dynamics and network limitations.

\section{Preliminaries}
\subsection{Graphical Game Theory}
Consider a game involving $n$ players and $\mathcal{A}_i$ define the set of actions available to any player $i \in \{1, \dots, n\}$.
Let 
$x = (x_1, \dots, x_n)$ denote the joint action profile of the players.
We allow the players to play mixed strategies, and the probability that $i$ is playing the action $x_i$ is denoted by the mixed strategy $Q_i(x_i)$.
Further, $-i$ denotes the set of all players but $i$, $(x'_i, x_{-i})$ denotes an alternative action profile where $i$ plays $x'_i$ instead of $x_i$ while $x_{-i}$ remains the same.
We define $Q(x)$ as an arbitrary joint probability distribution over the action profile $x$ with mixed strategies $Q_i(x_i)$ as the marginals.

A graphical game $\Gamma$ consists of the tuple $\langle \mathcal{G}, \mathbf{M} \rangle$ where $\mathcal{G}$ defines a graph whose vertices correspond to the players, and $\mathbf{M}$ represents the set of payoff functions.
\rebut{The graphical game-theory significantly reduce one's interacting agents count from $n$ to it's local neighborhood size \cite{kearns2001graphical}.}
For some player $i$, $\mathbf{M}_i \in \mathbf{M}$, $\mathbf{M}_i : \mathcal{A}_i \xrightarrow[]{} \mathbb{R}$.
Following the definition of expectation, we define the \textit{expected utility} as follows.
\begin{definition}
The expected utility of player $i$ under a joint probability $Q$ is
\begin{equation}
    \mathbb{E}_{Q}\big[\mathbf{M}_i(x)\big] = \sum_{x} Q(x_i, x_{-i})\mathbf{M}_i(x_i,x_{-i}).
\end{equation}
\end{definition}
\begin{definition}
The correlated equilibrium (CE) of a graphical game is a joint distribution $Q$ over the associated undirected graphical model, under which no player has a unilateral incentive to deviate. Thus, for $x_i, x'_i \in \mathcal{A}_i$, $\forall i$, and $x_i \neq x'_i$,
\begin{equation*}
    \mathbb{E}_{Q}\big[\mathbf{M}_i(x_{-i}, x_i)\big] \geq \mathbb{E}_{Q}\big[\mathbf{M}_i(x_{-i}, x'_i)\big].
\end{equation*}
\end{definition}

\begin{definition}
A mixed strategy Nash equilibrium (MSNE) is a special case of CE, where the joint probability is a product distribution of the marginals. Thus, $Q(X)$ $ = \prod_iQ_i(X_i)$ \cite{ortiz2020correlated}.
\end{definition}




\subsection{Variational Energy Functional}
The VI casts the inference problem over an MRF as a convex optimization problem and approximates the posterior distribution much more efficiently in contrast to exact inference methods \cite{koller2009probabilistic}.
Let $\mathcal{G} = (\mathcal{V}, \mathcal{E})$ be an MRF, where $\mathcal{V}$ and $\mathcal{E}$ are the set of vertices and edges, and $\mathcal{V}$ consists of a set of discrete RVs $\{X_1, \dots, X_{n}\}$.
\begin{definition}
The joint probability distribution over an MRF is often represented as a \textit{Gibbs} distribution
\begin{equation}
    p(X = x) = \frac{1}{Z}  \prod_{c \in \mathcal{C}} \phi_c(x_c),
    \label{eq:mrf_def}
\end{equation}
where, $\phi_c$ is a factor potential function associated with some clique $c \in \mathcal{C}$ of $\mathcal{G}$, $\phi_c: X^{|c|} \xrightarrow{} \mathbb{R}^+$, and $Z$ is the partition function to normalize the distribution, where $Z = \sum_{x_c} \prod_{c \in \mathcal{C}} \phi_c(x_c) $. 
\end{definition}
\begin{remark}
For any $\phi_c = \exp \{\varepsilon (x_c) \}$, $p(x)$ defines an exponential family distribution, where $\varepsilon: X^{|c|} \xrightarrow{} \mathbb{R}$ is some function that maps the clique $c$ to a real number.
\end{remark}
Let $Q$ and $P$ denote the approximating and the true posterior distributions in VI.
We consider the $I$-projection of Kullback-Leiber (KL)-divergence between the two distributions,
\begin{equation*}
    D(Q||P_\mathcal{C}) = \mathbb{E}_{Q} \Big[\ln\frac{Q(X)}{P_{\mathcal{C}}(X)} \Big],
\end{equation*}
where, $P_\mathcal{C}$ is the probability distribution over the set of cliques $\mathcal{C}$.
From \eqref{eq:mrf_def} and,
$\mathbb{H}_Q(X) = -\mathbb{E}_Q[\ln Q(X)]$;
\begin{equation*}
    D(Q||P_\mathcal{C}) = -\mathbb{H}_Q(X) - \mathbb{E}_Q\Big[\sum_{c \in \mathcal{C}} \ln \phi_c(x_c) \Big]  + \mathbb{E}_Q [ \ln Z ],
\end{equation*}
\begin{equation}
    D(Q||P_\mathcal{C}) = -F[\tilde{P}_\mathcal{C}, Q] + \ln Z,
    \label{eq:variational_energy}
\end{equation}
for $\tilde{P}_\mathcal{C}(x) = \prod_{c \in \mathcal{C}} \phi_c(x_c)$.
We identify $F[\tilde{P}_\mathcal{C}, Q]$ as the \textit{variational energy functional}. 
From \eqref{eq:variational_energy}, maximum $F$ gives the minimum KL divergence between the approximating and the true posteriors.
In this work, we subsume the two auxiliary objectives into factor potentials associated with cliques of the underlying graphical model.

\begin{figure}[t]
\centering
\subfigure[]
 	{\includegraphics[width = 0.242\textwidth, trim={0.25cm 0.15cm 1.03cm 0.1cm}, clip]{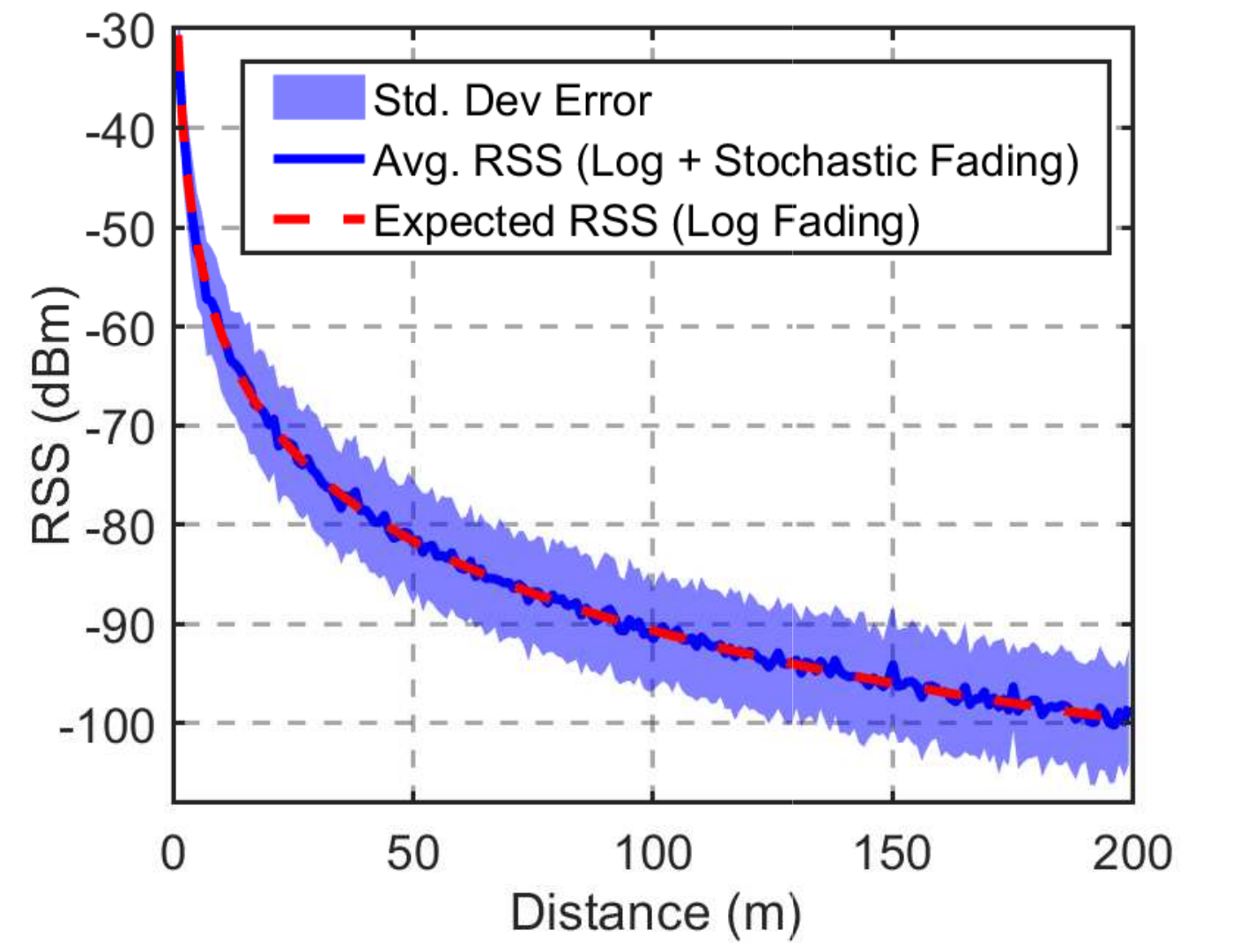}}
\subfigure[]
 	{\includegraphics[width = 0.24\textwidth, trim={0.7cm 0.15cm 1cm 0.1cm}, clip]{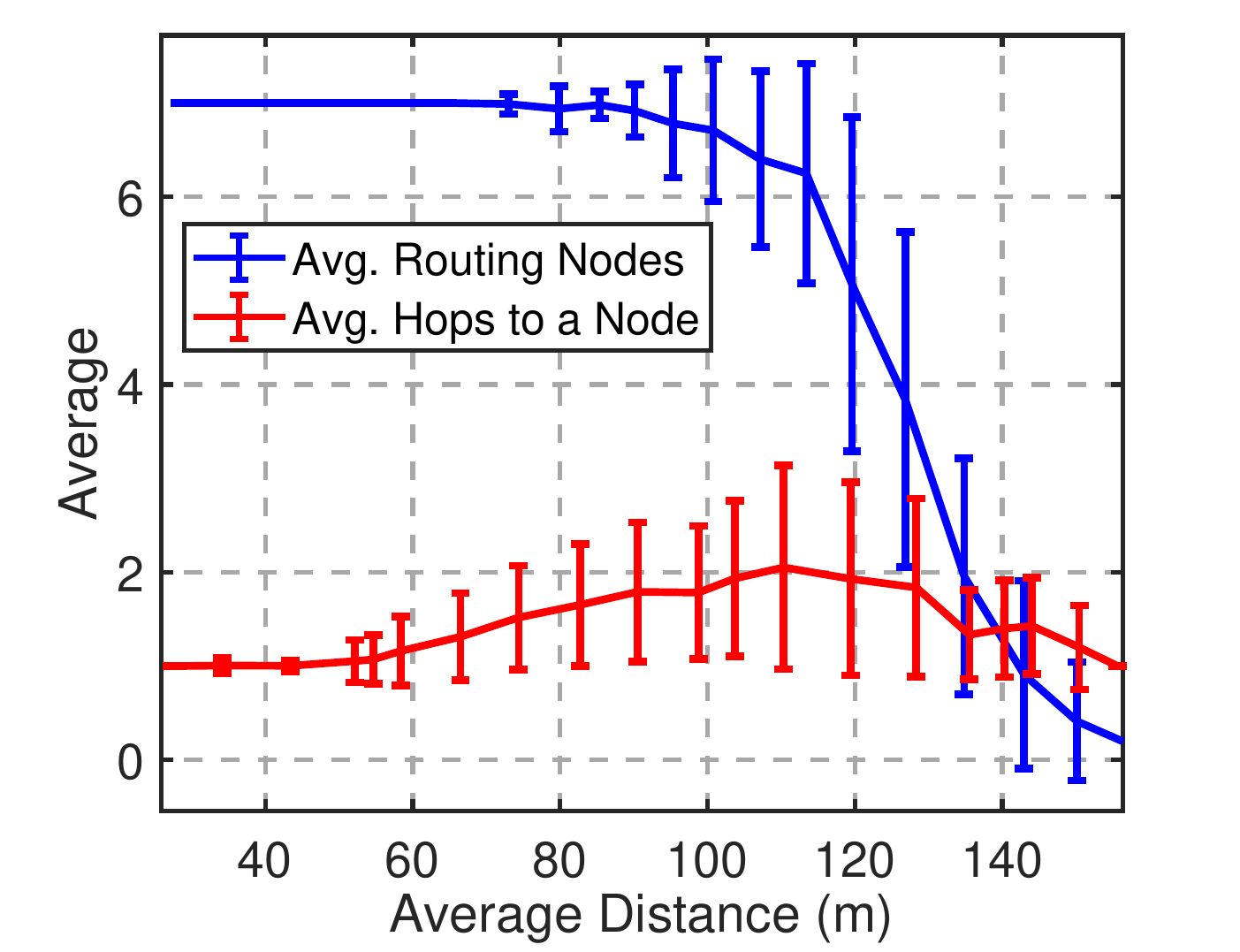}}
\caption{(a) The change of RSS, (b) the number of communicating nodes (blue) and the average hop-count (red) against the physical distance. The $\inf$ hops have been ignored when the link disappears for the purpose of plotting. The parameters used are $n$=3, $\mathcal{F}$=$32(dBm)^2$, $T_0 = 16.02$ dBm. }
 \label{fig:rss}
 \vspace{-10pt}
\end{figure}
\section{Problem Formulation}
To accommodate a wide range of applications, we consider providing wireless network coverage to a team of mobile UEs using a robot (UAV) team.
Note that we only control the robots while the UEs move arbitrarily over the ROI.
Our formulation can be further generalized to other coverage control problems such as surveillance, where the robot teams need to maintain the local connectivity while patrolling over an ROI.
\rebutf{To acco\-mmodate the cooperative nature of the game, we assume that all the robots are interested in maximizing the coverage and thus, share the same payoff function.}

First, we define a \textit{concentration ellipsoid} $\mathcal{R}$ to represent the ROI to capture the UE distribution, following \cite{belta2004abstraction}.
Such an abstraction allows us to easily scale, and to quantify the total wireless coverage independent of the UE team size.
Let $\Sigma$ define the covariance matrix associated with $\mathcal{R}$.

Considering the homogeneity in the robots, we populate the action space $\mathcal{A}_i$ by discretizing the dynamically feasible control action space of a robot.
\rebut{The state equation of a robot $i$ can be written as $\dot{y}_i = Ay_i + Bx_i$, where $A$, $B$ are positive semi-definite matrices and $x_i \in \mathcal{A}_i$.
Let $\dot{y}_i^{pos} \in \mathbb{R}^3$ be the position of $i$ as extracted from the state vector $\dot{y}_i$.
}

\rebutf{We consider that each stage of the communication-aware coverage game corresponds to a timestep $t$ and is a graphical game $\Gamma_t$ defined on the local neighborhoods $\mathcal{N}_i^t, \forall i$.
The aim of our work is to find a consensus UAV formation for $\mathcal{R}$ in the form of a MSNE for $\Gamma_t$. 
In the following sections, we show that this can be achieved by leveraging VI and iterative stage game optimization.}
\section{Approach}



\subsection{Communication-Aware Cooperative Coverage}
In practice, the Recieved Signal Strength (RSS) between any two wireless nodes can attenuate for multiple reasons such as path loss, shadowing, and fading.
Thus, in wireless networks it is common to model the channels in a stochastic fashion \cite{fink2011robust}.
In this work, we employ a channel model for the IEEE 802.11a protocol to obtain the expected RSS in point-point communication.
For a joint action set we define $f_{RSS}: \mathcal{A}_i \times \mathcal{A}_j \xrightarrow[]{} \mathbb{R}$ as
\begin{equation}
    f_{RSS}(x_i, x_j) = T_0 - \big\{ L_0 + 10n . \log(d_{ij}) + \mathcal{F}, \big\}
    \label{eq:rss}
\end{equation}
where, $d_{ij} = \norm{\dot{y}_i^{pos} - \dot{y}_j^{pos}}$, $f_{RSS}(d_{ij})$ is the RSS measured in dBm (decibels relative to milliWatt), $T_0$ is the transmission power, $L_0$ is the reference power loss for free space, $n$ is a path loss exponent and $\mathcal{F}$ is a zero mean Gaussian distribution to account for the \textit{fading} effect.
In this work, we used $L_0$ = 46.67 dBm calculated using the $Friis$ model for open spaces.
Similar real-world experiments have also been conducted in \cite{fink2011robust}, which helped us to model the fading in the channel.

Fig. \ref{fig:rss}(a) shows the change in RSS against the inter-robot distance for the channel model.
As in Fig. \ref{fig:rss}(b), we observed the network is densely connected at the beginning, but separated drastically as the nodes moved further away.
Specifically, the \textit{hop-count} between the robots increased with distance, making it much harder to maintain fixed neighborhoods. 
Here the hop-count refers to the number of interim connections between two communicating nodes in the network.
\rebut{In practice, as the ROI expands, the robots require to travel farther in search of better coverage; however, these experimental results signify the need to adjust the neighborhoods as the network topology changes.}

Thus, we define the local neighborhood of any robot $i$ at time $t$, $\mathcal{N}_i^t(k)$ as the list of $k$-hop nodes in the instantaneous network topology.
Therefore, for any $j \neq i$, $\mathcal{N}_i^t(k) = 
\{j \lvert \mathrm{Hops}(i,j) \leq k \}$,
where $\mathrm{Hops}(i,j)$ is the number of hops to node $j$ according to $i$'s \textit{routing table}.
With the definition of expectation, we obtain the expected RSS $\mathbb{E}[f_{RSS}] = \psi_{R}(x_i, x_j)$ resulted by selecting the actions $x_i, x_j$ as
\begin{equation}
    \psi_{R}(x_i, x_j) = T_0 - \big\{ L_0 + 10n . \log(d_{ij}) \big\}.
    \label{eq:exp_rss}
\end{equation}

We define function $\psi_{C}: \mathcal{A}_i \xrightarrow[]{} \mathbb{R}$ to quantify the coverage provided by the neighborhood $\mathcal{N}_i$.
We define the communication-aware coverage as the expected ``cooperative RSS field'' for the ellipsoidal $\mathcal{R}$.
Therefore,
\begin{equation}
    \psi_{C}(x_{\mathcal{N}}, \mathcal{R}) = \sum_{r \in \mathcal{R}} \max \big\{ \psi_{R}(x_i, r), \psi_{R}(x_{-i}, r)  \big\} p(r),
    \label{eq:coverage}
\end{equation}
where $p(r)$ is the probability of a point $r \in \mathcal{R}$ in the ROI.
In our work, $p(r)$ denotes the probability of having a UE at $r$ under the distribution characterized by $\Sigma$.
Here, $\psi_{R}(x_{-i},r)$ denotes the coverage imparted on $r$ by $i$'s any other neighbor.
When the values assigned to the local neighborhood of $i$, $x_{-i}$, and the ROI are fixed, we observe that the coverage function only varies with $x_i$ within the time interval.
Therefore, we denote the coverage function associated with $\mathcal{N}_i$ as $\psi_C(x_i)$.
This formulation allows us to construct a more realistic \textit{cooperative coverage field} for the neighborhood, as UEs often select the wireless node with the highest signal strength to connect in practice.


Note that for $r$ over large distances, $\psi_{C}(x_i)$ is stationary for some robot $j \in \mathcal{N}_{-i}$ when  $\psi_R(x_i, r) \leq \psi_R(x_j, r)$, as any UE at $r$ would connect to $j$ despite the actions of $i$ due to the higher signal strength.
\rebutf{Thus, this introduces a partitioning in the local coverage functions.}
Fig. \ref{fig:mrf}(a) shows a simple network topology for 4 robot nodes and a $k=1$-hop neighborhood for the $i$-th node.
Next, we use this coverage model to introduce the \textit{sufficient statistics} for MRF and the payoff functions.
\subsection{Payoff Function for the Stage Graphical Game}

Following the communication topology and the neighborhood parameter $k$, we define the graph $\mathcal{G}_t = (\mathcal{V}_t, \mathcal{E}_t)$ for the stage game $\Gamma_t$.
The set of vertices $\mathcal{V}_t = \{X_1, \dots, X_n \}$ comprises the random variables for each robot node.
The set of edges $\mathcal{E}_t$ contains an element $(i,j)$ if and only if $i$ and $j$ satisfies the neighborhood condition under $k$. 
Thus, $\mathcal{E}_t = \{ (i,j) | j \in \mathcal{N}_i^t(k),  \forall i \}$.
Hereafter, we ignore the script $t$, as we are interested in finding the equilibrium for a single-stage game.
Also, for a fixed $k$, let $\mathcal{N}_i^t(k) = \mathcal{N}_i^t$.

We consider that the payoff of a robot $i$ relies on the coverage provided by the neighborhood and its expected RSS with the neighbors.
Formally, we define,
\begin{equation}
 \mathbf{M}_i(x) = \alpha_a \psi_C(x_i) + \alpha_b \sum_{j \in \mathcal{N}_{-i}} \psi_{R}(x_i, x_j),
 \label{eq:payoff}
\end{equation}
where $\alpha_a$, $\alpha_b >0$ are two predefined weight parameters.
In this work, they scale the contributions from two factor potentials $\psi_c$, $\psi_p$ in the payoff function proportionately.
The first term forces the robot to move farther to maximize the coverage, and the second term penalizes the robot for selecting the actions that weaken the signal strength.
\rebut{We assume that the \textit{expected RSS} remains roughly unchanged in close proximities over $\mathcal{R}$.
Thus, considering the ROI size and inter-robot distances, we argue that the contradicting effects of the auxiliary objectives can be ignored for sufficiently small intervals of $t$.}
Therefore, a robot's best response action maximizes the coverage as well as the expected RSS with its neighbors.
In this work, we propose solving the game $\Gamma$ by performing posterior inference over an appropriately tailored MRF.
By using an exponential family distribution, we establish an analogy between the equilibrium in the game and the resulting joint distribution.

\subsection{Exponential Family Posterior Distribution for MRF}
We now discuss integrating the payoff function with a probabilistic graphical model defined on the game $\Gamma = (\mathcal{G}, \mathbf{M})$.
We start by modifying and factorizing $\mathcal{G}$ into pairwise and neighborhood subgraphs to complement the communication-aware coverage model defined in \eqref{eq:exp_rss} and \eqref{eq:coverage}. 
Let us first introduce an auxiliary edge $ (j,h)$ between $j$ and $h$ nodes, if $j,h \in \mathcal{N}_i$ and $(j,h) \notin \mathcal{E}$ for all $i$.
In other words, we convert every local neighborhood of $\mathcal{G}$ into a complete subgraph by adding a set of auxiliary edges, $(j,h) \in \mathcal{E}_{Aux}$.
We denote the derived graph as $\mathcal{G}' = (\mathcal{V}, \mathcal{E}')$, where $\mathcal{E}' = \mathcal{E} + \mathcal{E}_{Aux}$.
Fig. \ref{fig:mrf}(b) shows a clique transformed neighborhood subgraph for an initial communication topology.
We define \textit{neighborhood clique} as a clique that is comprised of the nodes of some neighborhood. 
Next, we factorize the derived graph into a set of cliques $\mathcal{C} = \mathcal{E}' \cap \{\mathcal{N}_i | \forall i\}$, such that each clique $c \in \mathcal{C}$ either represents an edge in $\mathcal{E}'$ or a neighborhood clique.
Finally, we associate each clique $c \in \mathcal{C}$ with a factor potential function $\phi_c : X^{|c|} \xrightarrow[]{} \mathbb{R}^+$. 
\begin{figure}[t]
    \centering
\subfigure[]
  	{\includegraphics[width = 0.23\textwidth,trim={0cm 0cm 0cm 0.1cm}, clip]{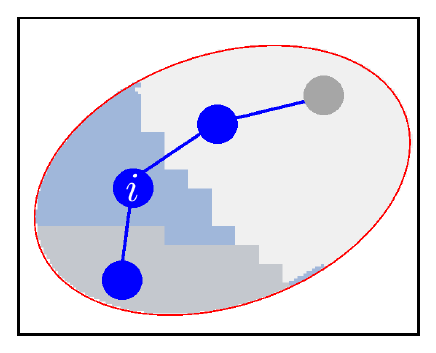}}
\subfigure[]
 	{\includegraphics[width = 0.23\textwidth,trim={0cm 0cm 0cm 0.1cm}, clip]{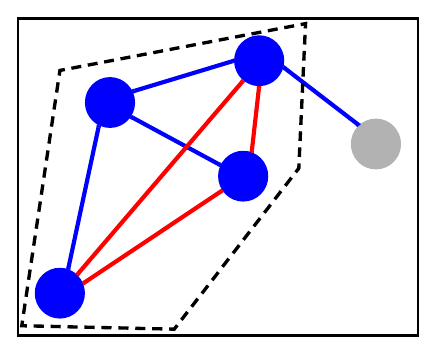}}
\caption{ (a). \rebutf{A visual representation of the local cooperative coverage function of $i$ for a single hop neighborhood. The partitions covered by each neighbor are colored differently.} (b). A communication topology with the newly added auxiliary edges (red) to a neighborhood. The dashed polygon denotes the resulting neighborhood clique. The neighborhood robots and the communication links are shown in blue.}
 \label{fig:mrf}
\vspace{-10pt}
\end{figure}
This formulation allows us to define a joint distribution over the induced MRF $\mathcal{G}'$ using the exponential family similar to \eqref{eq:mrf_def}.
Consider
\begin{equation}
\label{eq:phi}
\phi_c(x_c) = 
\begin{cases}
    \hspace{5pt} \phi_i(x_i) = \exp \big\{ \alpha_{i}. \psi_{C} \big \}  & c \in \{\mathcal{N}_i | \forall i\} \\
    \hspace{5pt}\phi_{ij}(x_i, x_j) = \exp \big \{ \alpha_{ij}. \psi_{R} \big \} & c \in \mathcal{E}',\\
\end{cases}
\end{equation}
where $\phi_i$, $\phi_{ij}$ simply redefine the factor potentials given the clique $c$'s type and $\alpha_{i}$, $\alpha_{ij} \geq 0$ are some weights associated with the cliques.
With this definition, we consider the following joint probability for the MRF $\mathcal{G}'$.
\begin{equation}
    p(x) = \frac{1}{Z} \exp \big\{ \alpha_a\sum_{i \in \mathcal{V}} \psi_C(x_i) + \alpha_b\sum_{ (ij) \in \mathcal{E}} \psi_{R}(x_i, x_j) \big\}.
    \label{eq:mrf}
\end{equation}
\begin{theorem}
The probability over the MRF $\mathcal{G}'$, $p(x)$ defines a linear exponential family with canonical parameters $\alpha$ and sufficient statistics $\psi$.
\end{theorem}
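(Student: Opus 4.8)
The plan is to bring \eqref{eq:mrf} into the canonical form of a linear exponential family and then simply read off the two ingredients named in the statement. Recall that a family $\{p(\cdot\,;\theta)\}$ is a linear exponential family when each member can be written as
\begin{equation*}
    p(x;\theta) = h(x)\exp\big\{ \langle \theta, T(x)\rangle - A(\theta) \big\},
\end{equation*}
where $\theta$ is the vector of canonical parameters, $T(x)$ the vector of sufficient statistics, $A(\theta)=\ln Z(\theta)$ the log-partition function, and the exponent is \emph{linear} in $\theta$ precisely because $T$ does not depend on $\theta$. My goal is therefore to exhibit such a decomposition for \eqref{eq:mrf} and verify the linearity condition.

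First I would collect the two aggregated potentials into a statistics vector and the two weights into a parameter vector,
\begin{equation*}
    \psi(x) = \Big( {\textstyle\sum_{i \in \mathcal{V}}} \psi_C(x_i),\ {\textstyle\sum_{(ij)\in\mathcal{E}}} \psi_{R}(x_i,x_j) \Big)^{\!\top}, \qquad \alpha = (\alpha_a,\alpha_b)^{\!\top}.
\end{equation*}
Then \eqref{eq:mrf} reads $p(x) = Z^{-1}\exp\{\langle \alpha,\psi(x)\rangle\}$, so the base measure is uniform ($h\equiv 1$) and the log-partition is $A(\alpha)=\ln Z = \ln\sum_x \exp\{\langle\alpha,\psi(x)\rangle\}$. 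Since each $\mathcal{A}_i$ is a finite discretization of the control space, the sum defining $Z$ is a finite positive quantity, so $A(\alpha)$ is well defined; and because the statistics $\psi_C,\psi_{R}$ are functionally independent of $\alpha$, the exponent is genuinely linear in the canonical parameters. This is exactly the claimed identification of $\alpha$ as canonical parameters and $\psi$ as sufficient statistics.

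Before those final checks, however, I would justify that the clique factorization in \eqref{eq:phi} actually collapses to the two-term exponent of \eqref{eq:mrf}. Substituting the two cases of $\phi_c$ into the Gibbs form \eqref{eq:mrf_def} and invoking the Remark, the logarithm of the unnormalized product becomes a sum of $\alpha_i\psi_C$ over neighborhood cliques plus a sum of $\alpha_{ij}\psi_{R}$ over edge cliques; tying the per-clique weights to the two shared weights ($\alpha_i=\alpha_a$ and $\alpha_{ij}=\alpha_b$, which is precisely the homogeneity/cooperativeness assumption of the model) and regrouping the two clique classes then reproduces \eqref{eq:mrf}.

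The main obstacle I expect is the bookkeeping in this middle step rather than any deep analytic argument. One must check that converting each neighborhood into a complete subgraph $\mathcal{G}'$ and then splitting $\mathcal{C}$ into edge cliques and neighborhood cliques does not double-count any potential, reconcile the edge sum over $\mathcal{E}'$ used to build the cliques with the sum over $\mathcal{E}$ appearing in \eqref{eq:mrf}, and confirm that the per-clique weights can be consistently collapsed to the two global weights $\alpha_a,\alpha_b$. Everything else—the inner-product form, the uniform base measure, finiteness of $Z$, and the linearity of the exponent in $\alpha$—follows directly once the exponent has been rewritten as a single inner product $\langle\alpha,\psi(x)\rangle$.
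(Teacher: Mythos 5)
Your proof is correct and takes essentially the same route as the paper's: rewrite \eqref{eq:mrf} as $p(x)=\exp\{\langle\alpha,\psi(x)\rangle-\ln Z\}$ and invoke Remark 1, the only cosmetic difference being that you collapse everything into a two-dimensional pair $(\alpha,\psi)$ whereas the paper keeps one coordinate per clique of $\mathcal{G}'$. The one bookkeeping point you flag but leave open --- reconciling the edge sum over $\mathcal{E}'$ with the sum over $\mathcal{E}$ in \eqref{eq:mrf} --- is exactly what the paper resolves by assigning $\alpha_c=0$ to the auxiliary pairwise cliques (equivalently $\phi_c\equiv 1$ on $\mathcal{E}_{Aux}$), so your assignment $\alpha_{ij}=\alpha_b$ must be restricted to the original edges $\mathcal{E}$ rather than applied uniformly over $\mathcal{E}'$.
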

\proof
Define two vectors $\psi$ and $\alpha$ that contain factor potentials and associated weights for cliques $\mathcal{C}$ of $\mathcal{G}'$.
Set each $\alpha_c$ associated with the clique $c$; $\alpha_c = \alpha_a$ for neighborhood cliques, $\alpha_c = \alpha_b$ for pairwise cliques, and $\alpha_c = 0$ for auxiliary pairwise cliques.
Thus, the summations in \eqref{eq:mrf} become $p(x) = \exp \big\{ \langle \alpha, \psi \rangle  - \Lambda \big\}$,
where $\langle \alpha, \psi \rangle$ is the inner product $\alpha$, $\psi$ , and $\Lambda = \ln Z$.
Following \textbf{Remark 1}, this defines a linear exponential family.
\endproof

With \textbf{Remark 1} and \textbf{Theorem 1}, we observe that the posterior $p(x)$ takes the standard form of a joint distribution over an MRF.
Given $p(x)$ and the payoff $\mathbf{M}_i$, we further notice that two functions share a mutual form.
Specifically, the summation terms that pertain to a single robot $i$ inside the exponential of the former resembles  the payoff function.
In the following section, we perform posterior inference over the MRF $\mathcal{G}$ and establish that the resulting probability distribution $Q(x)$ yields a consensus formation of the stage game $\Gamma$.

\subsection{Stage Game Optimization with MFVI}

In \textbf{Theorem 1} we established that the joint posterior $p(x)$ adheres to the form in \eqref{eq:mrf_def}, and in general, $\varepsilon(x_c) = \psi_c(x_c)$ \rebutf{for all $c \in \mathcal{C}$.}
Thus, we start by considering the variational energy functional $F$, associated with \eqref{eq:mrf}.
It is clear from \eqref{eq:variational_energy} that maximizing $F$ results in the minimum KL divergence between the true and the approximating posteriors. 
Here we use MFVI to optimize $F$ and to obtain the approximating posterior.
\rebutf{Specifically, the mean-field assumption considers the posterior distribution as a product of marginals each corres\-ponding to a single RV.}
Therefore, the posterior consists of a set of independently and identically distributed (iid) exponential marginals characterized by their means \cite{wainwright2008graphical}.
This assumption improves the tractability of the inference procedure by restricting the search space to the mean-field family instead of the entire space of distributions.
Using the mean-field assumption, we formally define the optimization problem as 
\begin{subequations}
  \begin{alignat*}{2}
    \mathrm{Finding} \quad & \{Q_i(X_i)\}, \\
    \mathrm{Maximizing } \quad & F[\tilde{P}_\mathcal{C}, Q],  \\
    \mathrm{Subjecting \,to} \quad &Q(X) = \prod_i{Q_i(X_i)}, \notag \\
    & \sum_{x_i}{Q(x_i)} = 1, \forall i = 1 \dots, n.
    \nonumber,
  \end{alignat*}
\end{subequations}
where $Q_i(x_i)$ denotes the marginal probability distribution of $i$ in the posterior.
The goal of the MFVI is to find an update rule for each marginal distribution while keeping the neighboring RVs fixed.
From \eqref{eq:variational_energy}, the variational energy functional takes the form
\begin{equation*}
   F[\tilde{P}_\mathcal{C}, Q] = \mathbb{H}_Q(X) + \mathbb{E}_Q\Big[\sum_{c \in \mathcal{C}} \ln \phi_c(x_c) \Big].
\end{equation*}
We consider the variational energy imparted on $i$ from its neighborhood $\mathcal{N}_i$ under joint $Q$ expressively,
\begin{equation}
    F_i = \mathbb{H}_Q(X_i) +  \mathbb{E}_Q\Big[ \sum_{j \in \mathcal{N}_{-i}} \ln \phi_{ij}(x_i, x_j) + \ln \phi_{i}(x_i)\Big].
    \label{eq:F_i}
\end{equation}
In order to maximize $F_i$, we write the Lagrangian $\mathcal{L}_i$ with the Lagrange multiplier being $\lambda$ as
\begin{equation*}
    \mathcal{L}_i[Q]  = \mathbb{H}_Q(X_i) + \sum_{c \in \mathcal{C} } \mathbb{E}_Q\Big[ \ln \phi_{c} \Big] +  \lambda \Big\{ \sum_{x_i}{Q_i(x_i)} - 1 \Big\}.
\end{equation*}
Next, we differentiate the Lagrangian $\mathcal{L}_i$ w.r.t the marginal $Q_i(x_i)$ and obtain the fixed points corresponding to the maximum.
\begin{equation}
\begin{split}
    \frac{d}{d Q(x)} \mathcal{L}_i[Q] & = - \ln Q(x) - 1 + \sum_{c \in \mathcal{C}} \mathbb{E}_Q \big[\ln \phi_{c} | x_i\big] + \lambda.
\end{split}
\label{eq:derivative}
\end{equation}
During the differentiation step we used the facts that the derivatives of the entropy $\mathbb{H}_Q(X_i)$ and expectation $ \mathbb{E}_Q[\ln \phi_{c}] $ terms w.r.t  $Q(x_i)$ are $-\ln Q(x_i) - 1$ and $\mathbb{E}_Q[\ln \phi_{c} | x_i]$ respectively.
Here $\mathbb{E}_Q \big[\ln \phi_{c} | x_i\big]$ denotes the conditional expectation of the factor potential $\phi_c$ given the value $x_i$.
Notice that the factor function $\phi_c$ changes according to \eqref{eq:phi} given the clique type. 
Setting the derivative to 0 and taking the exponentials of the both sides yields the update rule
\begin{equation}
    Q_i(x_i) = \frac{1}{Z_i} \exp \Big\{  \mathbb{E}_Q \Big[ \sum_{j \in \mathcal{N}_{-i}}\ln \phi_{ij} | x_i + \ln \phi_{i} \Big] \Big\},
    \label{eq:update}
\end{equation}
where $Z_i$ is a typical exponential family normalization constant, as introduced in \textbf{Definition 4}, and the Lagrange multiplier $\lambda$ gets dropped in the normalization.
The resulting update rule is also known as the \textit{coordinate ascent mean-field approximation}.
Further, the redefinition of $\phi_c$ in \eqref{eq:phi} ensures that $\ln \phi_c(x_c)$ exists and reflects the coverage model.
\begin{algorithm}
\SetAlgoLined
 Populate $\mathcal{N}_i, \forall i$ using the routing table for $i$\\
 Construct $\mathcal{G}'$ \\
\For{$i \leftarrow 1 \dots r \dots n$} {
 $Q_i(X_i) \leftarrow \frac{1}{Z_i} \{\phi_i(x_i)\}$ \\
}
 \While{$Q_{old}(X) \neq Q(X)$}{
 $Q_{old}(X) = Q(X)$ \\
 Choose $X_i$ from $\{X_1, \dots , X_n\}$ \\
  $\hat{Q}_i(x_i, x_j) \leftarrow \sum_{\substack{j \in \mathcal{N}_{-i}}} \Big\{ \sum_{x_j} \ln \phi_{ij}(x_i, x_j)  \Big\}$  \\
  $\tilde{Q}_i(x_i) \leftarrow  \sum_{x_i} \ln \phi_i(x_i) $ \\
  $Q_i(x_i)  \leftarrow \frac{1}{Z_i} \exp \{\hat{Q}_i(x_i, x_j) + \tilde{Q}_i(x_i) \} $ \\
  $Q(X) \leftarrow \prod_{i}Q_i(X_i)$
 }
Execute $\argmax_{x_i} Q_i(X_i)$ on robot $i$
 \caption{Mean-Field Stage Game Optimization}
 \label{algo}
\end{algorithm}

The stage game optimization algorithm summarizes the key steps of MFVI into an iterative procedure to solve the graphical game $\Gamma = \langle \mathcal{G}, \mathbf{M} \rangle $ by performing posterior inference on MRF $\mathcal{G}'$.
Specifically, we optimize each RV of the MRF to calculate the marginal probabilities $Q_i(X_i)$ while fixing the neighboring RVs. 
In the graphical game theoretical paradigm, this is equivalent to calculating the mixed strategies profile of each player given the neighborhood.
In the next section, we next show that a resulting posterior probability distribution $Q$ induces an equilibrium in the graphical game $\Gamma$.

\subsection{Equilibrium in the Stage Game}

Consider the modified stage game $\Gamma = \langle \mathcal{G}, \mathbf{M} \rangle$ where $\mathbf{M}_i \in \mathbf{M}$.
In this section, we discuss the interplay between the equilibrium solutions of $\Gamma$ and the posterior probability of the factorized MRF induced by $\mathcal{G}'$.
\begin{theorem}
The joint posterior distribution $Q^*(x)$ over the MRF induced by $\mathcal{G}'$ results a CE for the stage game.
\end{theorem}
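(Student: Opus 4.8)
The plan is to show that the converged mean-field posterior $Q^*(x)=\prod_i Q_i^*(x_i)$ satisfies the no-deviation inequality of \textbf{Definition 2} by exploiting the structural match between the MFVI fixed-point marginals in \eqref{eq:update} and the payoff $\mathbf{M}_i$ in \eqref{eq:payoff}. First I would take the coordinate-ascent update rule at convergence, where each marginal is stationary, and substitute the factor potentials from \eqref{eq:phi} together with the weight assignment fixed in \textbf{Theorem 1} (i.e. $\alpha_i=\alpha_a$ for neighborhood cliques and $\alpha_{ij}=\alpha_b$ for pairwise cliques). Because the mean-field family forces $Q^*$ to be a product distribution, the conditional expectation $\mathbb{E}_{Q^*}[\,\cdot\mid x_i]$ factorizes over the neighbors, so the exponent of \eqref{eq:update} collapses to $\alpha_a\psi_C(x_i)+\alpha_b\sum_{j\in\mathcal{N}_{-i}}\sum_{x_j}Q_j^*(x_j)\,\psi_R(x_i,x_j)$. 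The key observation is that, by \textbf{Definition 1}, this is exactly the conditional expected utility $\mathbb{E}_{Q^*}[\mathbf{M}_i(x_{-i},x_i)\mid x_i]$ of player $i$ committing to $x_i$ against the marginal strategies of its neighborhood.

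Having identified the exponent with the expected payoff, I would conclude that $Q_i^*(x_i)\propto \exp\{\mathbb{E}_{Q^*}[\mathbf{M}_i(x_{-i},x_i)\mid x_i]\}$, so each marginal is a strictly monotone (softmax) function of the expected utility. From this I would read off the equilibrium property: the action carrying the most mass under $Q_i^*$ is precisely the one maximizing $\mathbb{E}_{Q^*}[\mathbf{M}_i(x_{-i},\cdot)\mid \cdot]$, which is the action the robot executes through the $\argmax_{x_i}Q_i^*(X_i)$ decoding step of Algorithm 1. Using the independence $Q^*(x_{-i}\mid x_i)=Q^*_{-i}(x_{-i})$, the expected utility of following the recommendation $x_i$ equals $\mathbb{E}_{Q^*_{-i}}[\mathbf{M}_i(x_i,x_{-i})]$, and no alternative $x'_i$ yields a larger value at the mode, giving the inequality $\mathbb{E}_{Q^*}[\mathbf{M}_i(x_{-i},x_i)]\ge\mathbb{E}_{Q^*}[\mathbf{M}_i(x_{-i},x'_i)]$ required for a CE. Since $Q^*$ is a product of marginals, \textbf{Definition 3} then upgrades this to an MSNE, which is the consensus formation sought in the problem statement.

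The hard part will be bridging the gap between the softmax form produced by MFVI and the exact unilateral-deviation inequality of \textbf{Definition 2}: the Gibbs marginal places nonzero mass on sub-optimal actions, so the raw fixed point is a quantal, entropy-regularized response rather than a hard best response, and the inequality holds exactly only for the mode (equivalently, in the regime where the potentials $\alpha_a,\alpha_b$ dominate the entropy term $\mathbb{H}_Q$ in $F$). I would therefore make the argument rigorous by either invoking the $\argmax$ decoding so that the executed profile is a mutual best response to the neighbors' marginals, or by restricting the inequality to the supported maximizers of each $Q_i^*$. A secondary point to verify is that the mean-field stationary point exists and is attained (Algorithm 1 terminating with $Q_{old}=Q$), which is where I would appeal to the coordinate-wise concavity of $F$ from \eqref{eq:F_i} and the compactness of the probability simplex over the finite action set $\mathcal{A}_i$.
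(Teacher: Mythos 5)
Your derivation is essentially the paper's own proof: both take the MFVI fixed point \eqref{eq:update}, substitute the potentials \eqref{eq:phi} with the weight assignment of \textbf{Theorem 1}, obtain $\ln Q_i^*(x_i) \propto \mathbb{E}_{Q^*}\big[\mathbf{M}_i(x_i,x_{-i})\big]$ so that the ordering of the marginal probabilities coincides with the ordering of expected payoffs, and then use the product form with \textbf{Definition 3} to upgrade the CE to an MSNE. The only divergence is in the ``hard part'' you flag. The paper does not bridge the softmax-versus-best-response gap at all: its proof simply notes that $Q_i(x_i) \geq Q_i(x'_i)$ implies $\mathbb{E}_{Q^*}\big[\mathbf{M}_i(x_i,x_{-i})\big] \geq \mathbb{E}_{Q^*}\big[\mathbf{M}_i(x'_i,x_{-i})\big]$ and declares this to be the CE condition, which suffices only because the paper's \textbf{Definition 2} states the no-deviation inequality unconditionally (without conditioning on the recommended action), so pairwise monotonicity is literally all that is demanded. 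Your instinct is correct that under the standard, recommendation-conditioned CE definition the Gibbs marginal is a quantal (entropy-regularized) response rather than an exact best response, with the inequality exact only at the mode; your proposed patches --- invoking the $\argmax_{x_i} Q_i^*(X_i)$ decoding of Algorithm 1 so the executed profile is a mutual best response to the neighbors' marginals, or restricting the inequality to the supported maximizers --- are repairs the paper never makes, so your write-up is in fact more careful about what is actually established. Your secondary concern, existence and attainment of the mean-field stationary point via coordinate-wise concavity of $F$ over the compact simplex on the finite action sets, is likewise sound but is also left unaddressed in the paper.
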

\begin{proof}
According to \eqref{eq:F_i} and \eqref{eq:derivative}, each marginal $Q_i(X_i)$ that comprises $Q^{*}(X)$ defines a local maximum of the energy functional $F_i$.
Therefore, for some actions $x_i, x'_i \in \mathcal{A}_i$, under $Q^*(X)$ the marginals $Q_i(x_i) \geq Q_i(x'_i)$, and thus,
\begin{equation}
   \ln Q_i(x_i) \geq \ln Q_i(x'_i).
   \label{eq:qi}
\end{equation}
Now consider an action profile $x=(x_i, x_{-i})$ where $x_{-i}$ represents the actions of $\mathcal{N}_{-i}$ and $x_j \in x_{-i}$ for some $j\in \mathcal{N}_{-i}$.
From \eqref{eq:update},
\begin{equation*}
   \ln Q_i(x_i) \propto \mathbb{E}_{Q^*} \Big[ \sum_{j \in \mathcal{N}_{-i}}\ln \phi_{ij}(x_i, x_j) + \ln \phi_{i}(x_i) \Big].
\end{equation*}
Substituting from the definition \eqref{eq:phi} for $\phi_i, \phi_{ij}$ gives,
\begin{equation*}
   \ln Q_i(x_i) \propto \mathbb{E}_{Q^*} \Big[ \sum_{j \in \mathcal{N}_{-i}} \alpha_b \psi_{R}(x_i, x_j) + \alpha_a \psi_{C}(x_i) \Big].
\end{equation*}
Notice that the summations inside the expectation match the definition of a player's payoff function \eqref{eq:payoff}.
Therefore, substituting from the payoff function,
\begin{equation}
   \ln Q_i(x_i) \propto \mathbb{E}_{Q^*} \Big[\mathbf{M}_i(x_i,x_{-i}) \Big].
   \label{eq:propto}
\end{equation}
From \eqref{eq:qi} and \eqref{eq:propto},
\begin{equation*}
    \mathbb{E}_{Q^{*}}\big[\mathbf{M}_i(x_i,x_{-i})\big] \geq  \mathbb{E}_{Q^{*}}\big[\mathbf{M}_i(x'_i, x_{-i})\big].
\end{equation*}
Thus, according to the definition, $Q^*(X)$ yields a CE of the stage game $\Gamma$.
\end{proof}
According to \textbf{Definition 3}, we observe that the CE resulting from MFVI is indeed an MSNE, due to the product form of $Q^*(X)$.
Therefore, we argue that the posterior inference over MRF $\mathcal{G}'$ results in a consensus formation for the stage game $\Gamma$.
However, recall that the stage game is sub-terminal as the calculated mixed-strategy actions are the controls of the robots' dynamical model.
Therefore, we solve the stage game in iteratively to obtain continuous trajectories and, subsequently, a consensus formation for the communication-aware coverage game.
The distributed configuration of our approach reduces the stage game graph into the neighborhood clique of the derived graph $\mathcal{G}'$ for any robot, rendering the distributed setting much more desirable for controlling large-scale swarms.

\section{Experiments and Results}
\subsection{Experiments Setup}
\rebut{We evaluated the proposed approach using NS-3 and Robot Operating System (ROS) environments in a mobile wireless network scenario, that consists of UAVs and a mobile UE team.\footnote{Find the Mavswarm simulator we used for this work at: \url{https://github.com/malintha/multi\_uav\_simulator}.}
Briefly, NS-3 is a widely employed event simulator to design and implement network models, and we delegate the task of network packet routing to NS-3 by accounting for wireless signal attenuation as the nodes move.
Each UAV was equipped with two wireless interfaces that complied with the channel model to establish the 1) inter-UAV, and 2) UAV-UE links. 
\rebutf{The action set of a UAV consists of acceleration-based discretized control inputs, and we select the actions by optimizing the stage games iteratively to reach a consensus formation.}
The UAVs communicate with their neighbors over the inter-UAV network for optimizing the game. 
The UAV-UE links were only used to measure the coverage RSS of the UEs for evaluation purposes.
}

Further, to calculate the routing paths as the UAV and UE nodes move, we used NS-3's inbuilt \textit{Optimized Link State Routing} (OLSR) algorithm \cite{clausen2003olsr}.
The OLSR algorithm's ability to populate the routing tables by accounting for the communication uncertainties helps us update the local neighborhoods in real-time.
The ROS and NS-3 platforms communicate throu\-gh \textit{Remote Procedure Calls} (RPC) to update the game accordingly.
\rebut{For all the UAVs, we ran the stage game optimization algorithm at a modest frequency $1/t =$ 1Hz in order to account for communication delays.}
We further assume that the UAVs can observe the ROI abstraction, characterized by its mean and the covariance matrix $\Sigma$, by communicating with a base station.
\rebut{We considered a \textit{differentially flat} dynamical model for fixed altitude navigation for the UAVs.
Thus, we uniformly sampled the acceleration space of a UAV within bounds [$-3ms^2, 3ms^2$] to populate $\mathcal{A}_i \subset \mathbb{R}^2, \forall i$ along each X, Y dimension similar to our previous work \cite{9560899}}.
For the computational feasibility, we discretized the ROI $\mathcal{R}$ into $10m \times 10m$ cells.
All the implementations and experiments are conducted using the C++ programming language.

\subsection{Experiments}
\begin{figure}[t]
\centering
\subfigure[]
 	{\includegraphics[width = 0.241\textwidth, trim={0.2cm 0.0cm 0.6cm 0.1cm}, clip]{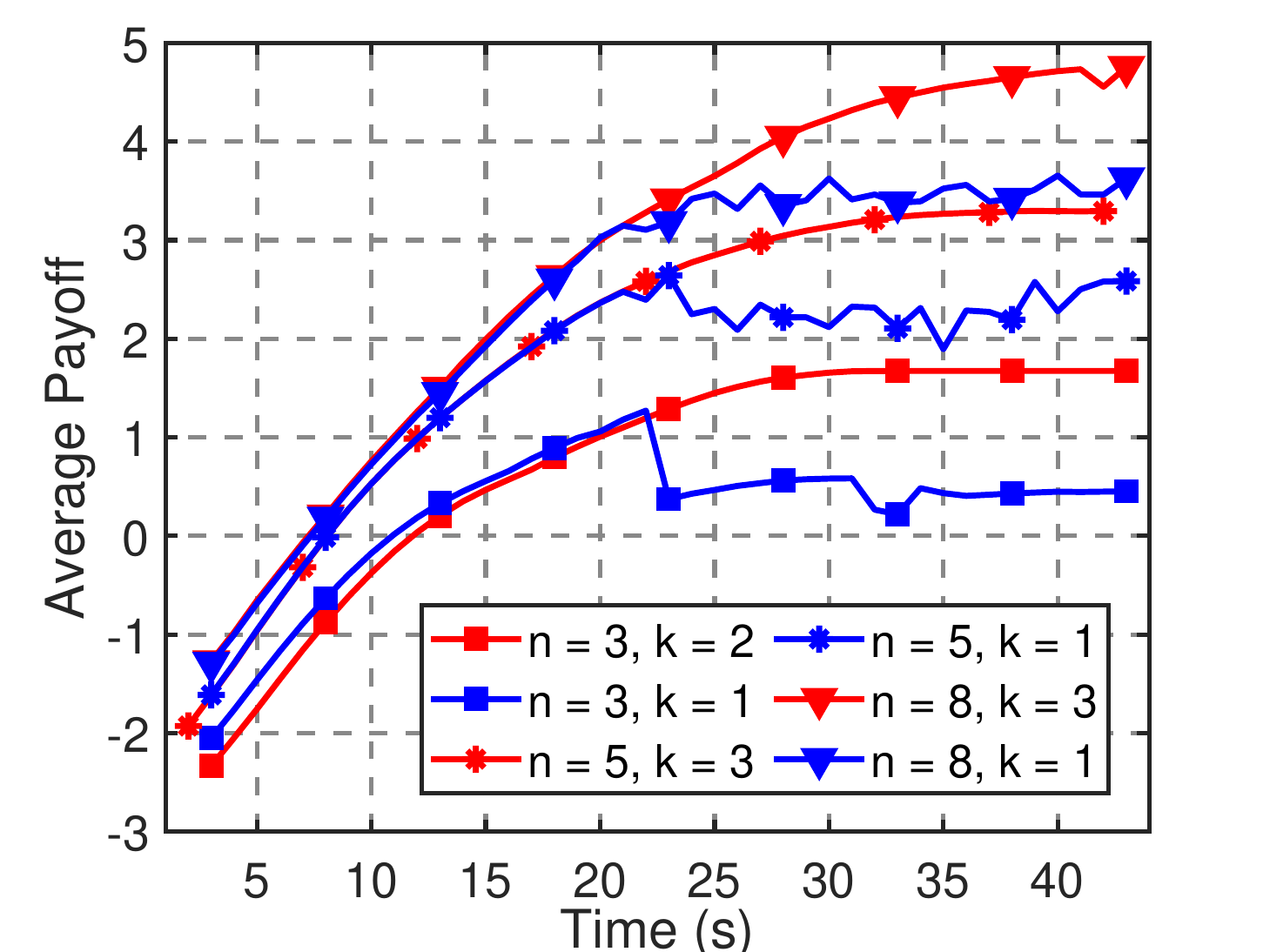}}
\subfigure[]
 	{\includegraphics[width = 0.241\textwidth, trim={0.5cm 0.0cm 0.6cm 0cm}, clip]{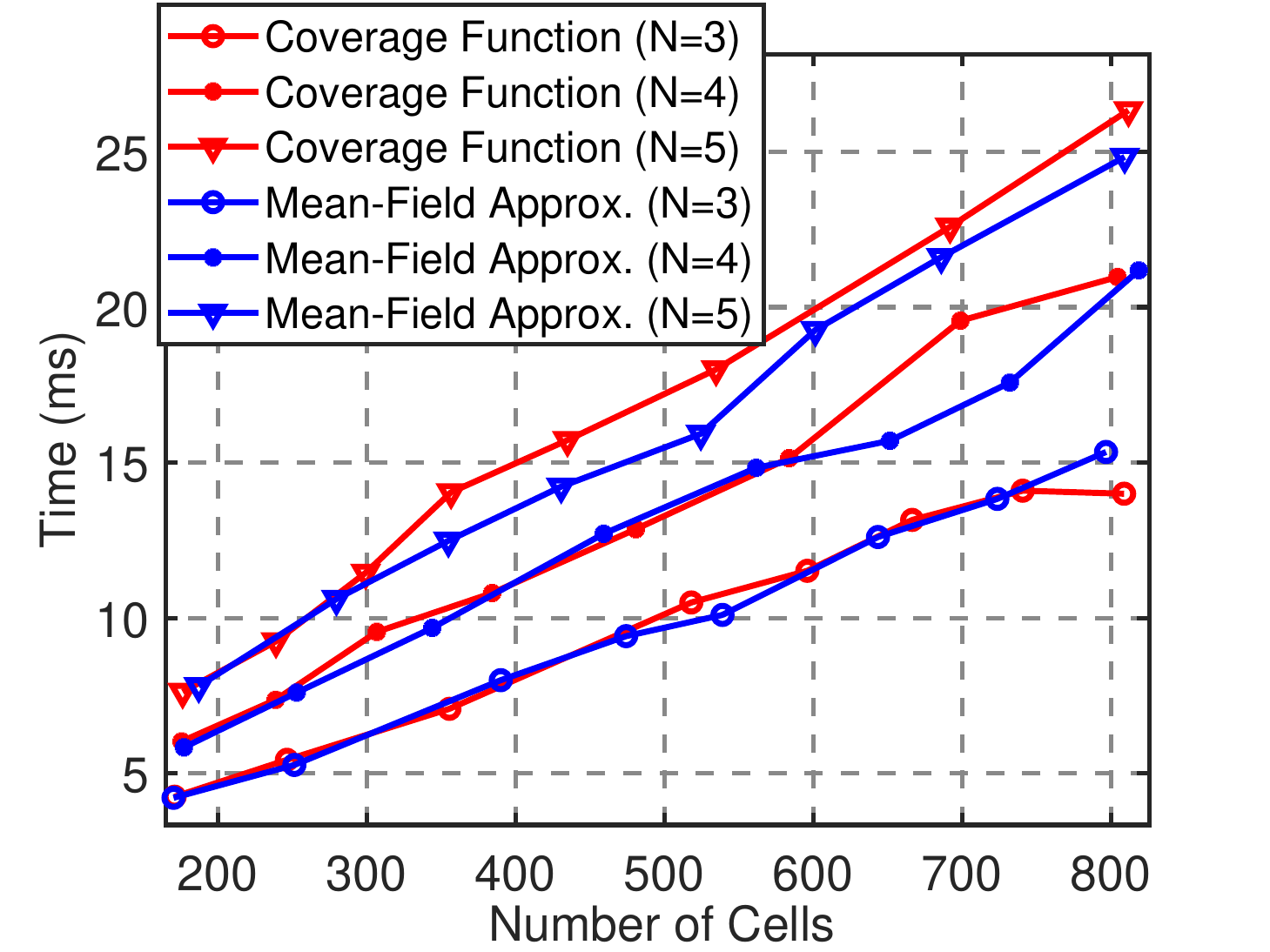}}
\caption{(a). Change of payoff according to Eq. \eqref{eq:payoff} as the UAV swarm navigate over a stationary ROI. The values are plotted by changing the swarm size and the communication hop-count. (b). Computational time to optimize a single stage of the game against the number of neighbors $|\mathcal{N}_i|$ and the size of the ROI. Each cell represents an area of 100$m^2$.}
 \label{fig:hops}
 \vspace{-5pt}
\end{figure}
We evaluated the coverage and computational performance of the proposed approach against the number of UAVs and the size of the ROI.
We accommodated the latter scenario by allowing the UE team to travel between arbitrarily chosen start and goal locations, resulting the ROI's shape and size to change over time (Fig. \ref{fig:coco_images}).
We observed that for small ROIs, $\alpha_b$'s effect is minimal, thus rendering the cooperative coverage to govern the payoff. 
This reduces the tuning effort of the proposed method to a great extent. 
Throughout the experiments we used the parameters $\alpha_a = 1$ and $\alpha_b = 0.001$ to cater to varying ROI sizes.

We first computed the average payoff of a UAV for the stationary ROI case by varying the maximum hop-count and the number of UAV nodes in the swarm as in Fig. \ref{fig:hops}(a).
Even though the single-hop neighborhoods' payoffs overlapped with those of higher-order at close proximities, as the UAVs travel farther, the stability of the formations and the payoffs deteriorated.
This is mainly caused by the densely connected network topology at close proximities; which permits the UAVs to observe the global swarm state with fewer hops.
Therefore, it is understandable that as the network separates, the cooperative coverage deviates from the global value due to the increased locality.
By increasing the allowed maximum hops, we show our approach can result in better payoffs and stable UAV formations; without explicitly aggregating the global swarm state.
\rebutf{Thanks to the adaptive neighborhood property that complements the network dynamics, our approach scales well with the size of the swarm and ROI.}

For the performance evaluations, we used $k=3$ as the allowed communication hops throughout the experiments.
We evaluated computational time for two crucial steps in the game: 1) calculating the cooperative coverage and 2) mean-field approximation stages.
Specifically, we performed the calculations against the number of the neighbors in a game $|\mathcal{N}_i|$, and the size of the discretized ROI.
In the implementation, we calculated the fixed values for any $r\in\mathcal{R}$ in the cooperative coverage function $\psi_C$ beforehand to eliminate redundant calculations and $\max(.)$ comparisons within a single stage of the game.
We observed this to improve the computational efficiency by multitudes, especially optimizing a single stage of the game under 50ms for most cases. 
Fig. \ref{fig:hops}(b) shows the computational time for the two corresponding steps.


Fig. \ref{fig:coco_images} shows the robot trajectories for $n=3, 5$ and $8$ UAVs scenarios.
\rebut{
We defined the ROI using the concentration ellipsoid for 5 UEs distributed around the \textit{origin}.
In each scenario, we simulated the system with stationary and moving ROIs for 40s and 80s intervals, respectively.}
We initialized the UAVs from the origin for all the experiments with a fixed altitude, and maintaining the UEs stationary at first.
Fig. \ref{fig:coco_images}(a-c) show the equilibrium UAV formations and the communication topologies.
\begin{figure}[t]
\centering
\subfigure[]
 	{\includegraphics[width = 0.155\textwidth, trim={0.04cm 0.04cm 0.04cm 0.04cm}, clip]{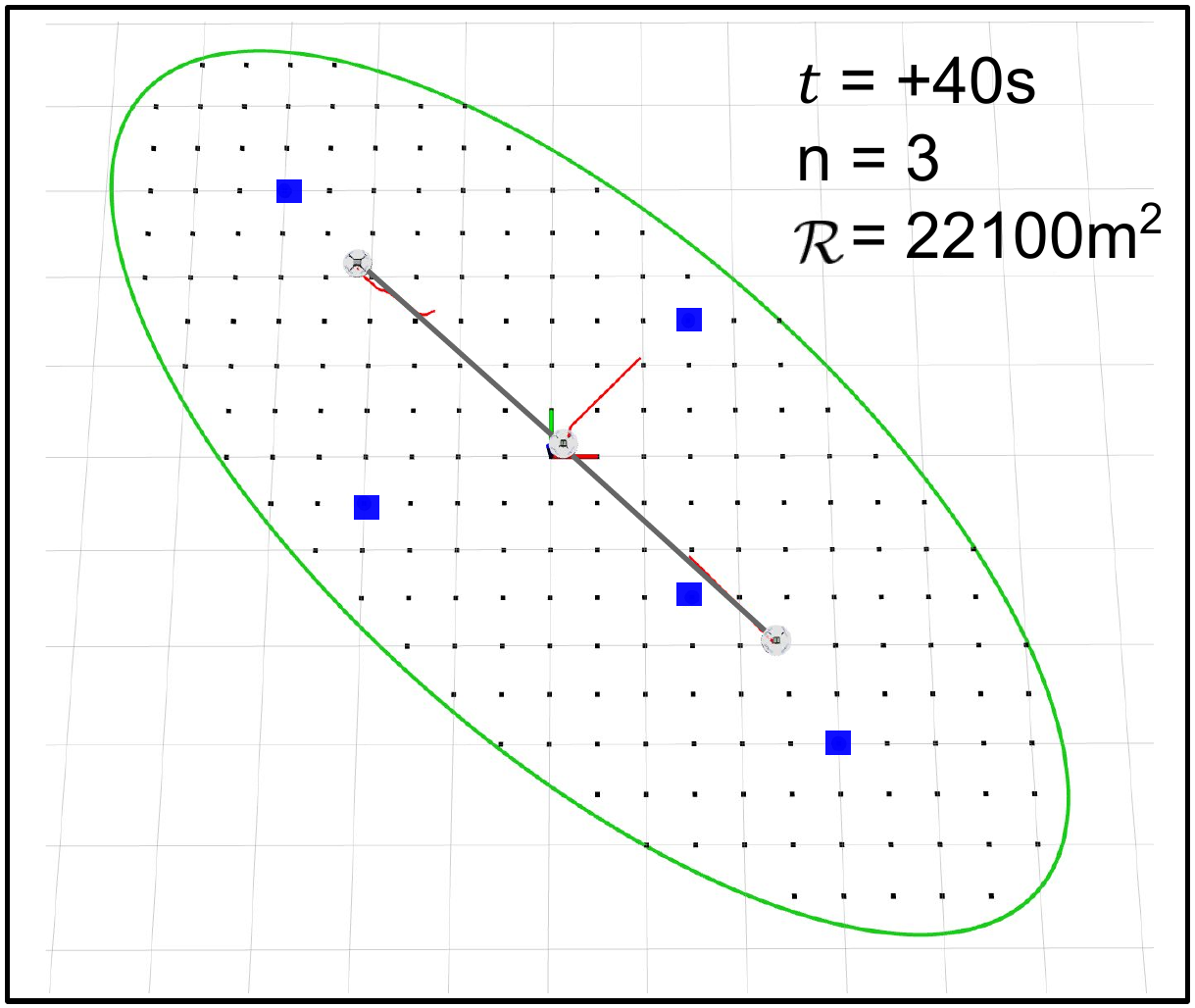}}
\subfigure[]
 	{\includegraphics[width = 0.155\textwidth, trim={0.04cm 0.04cm 0.04cm 0.04cm}, clip]{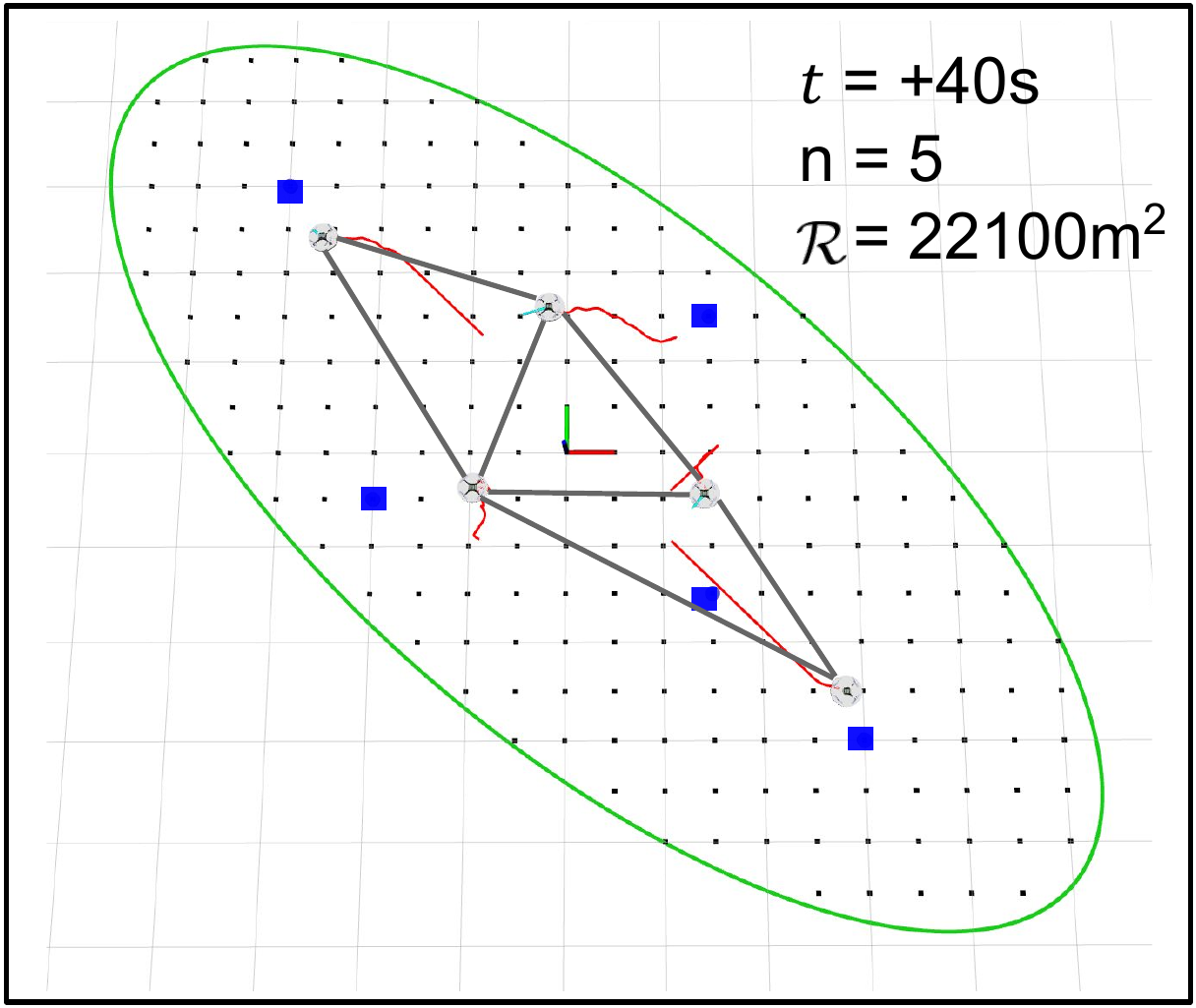}}
\subfigure[]
 	{\includegraphics[width = 0.155\textwidth, trim={0.04cm 0.04cm 0.04cm 0.04cm}, clip]{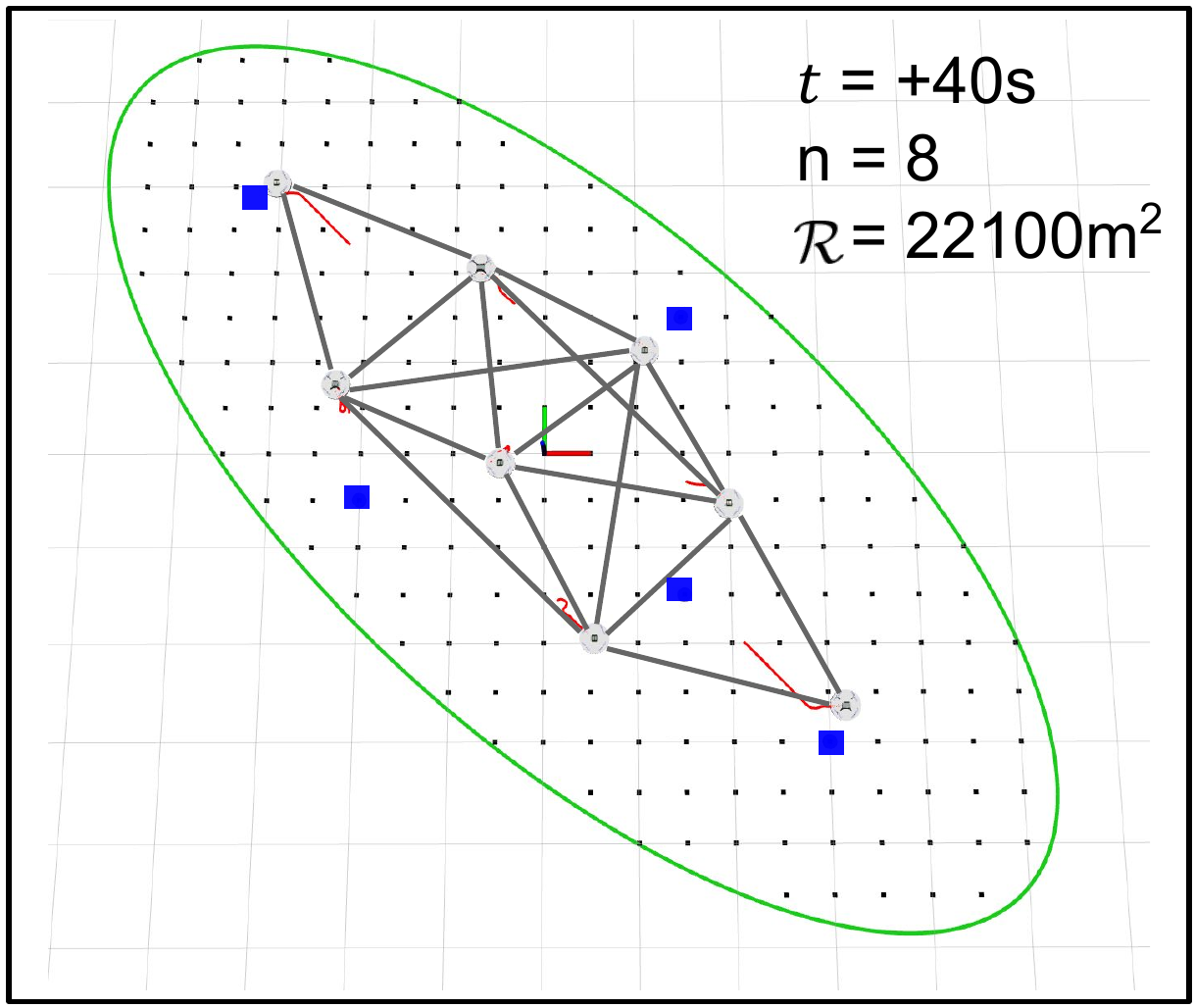}}

\subfigure[]
 	{\includegraphics[width=0.155\textwidth,trim={0.04cm 0.04cm 0.04cm 0.04cm}, clip]{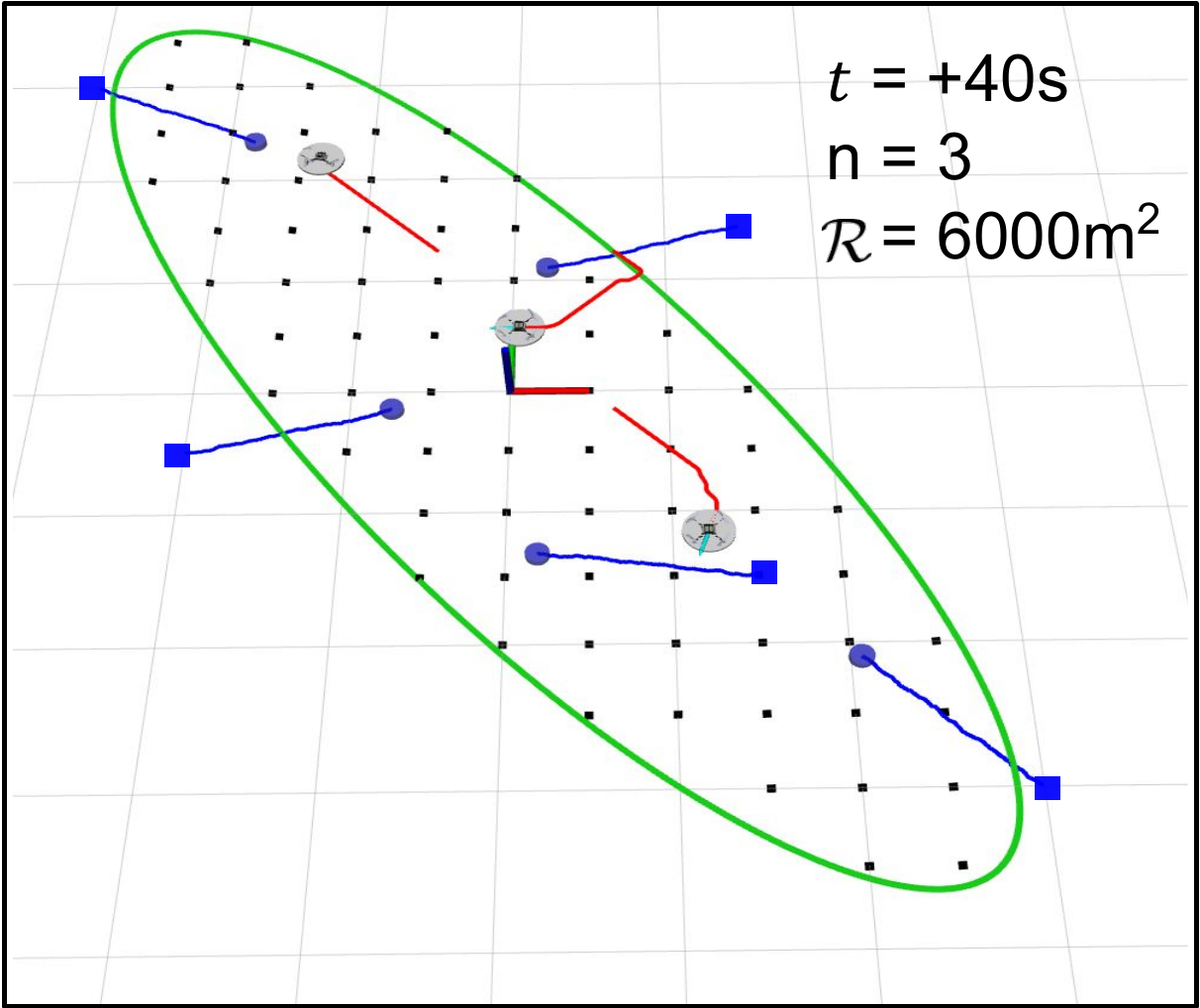}}
\subfigure[]
 	{\includegraphics[width=0.155\textwidth,trim={0.04cm 0.04cm 0.04cm 0.04cm}, clip]{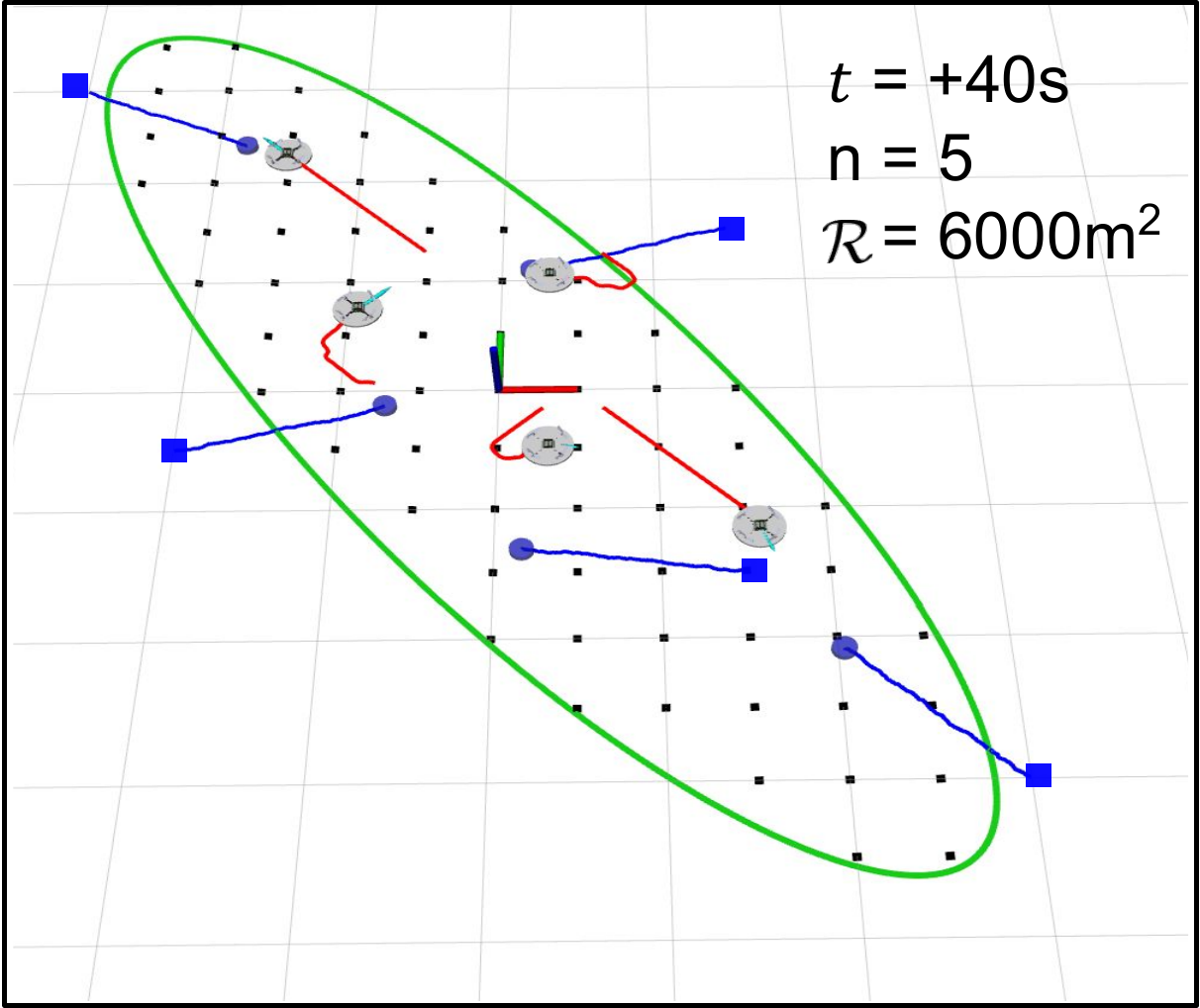}}
\subfigure[]
 	{\includegraphics[width=0.155\textwidth,trim={0.04cm 0.04cm 0.04cm 0.04cm}, clip]{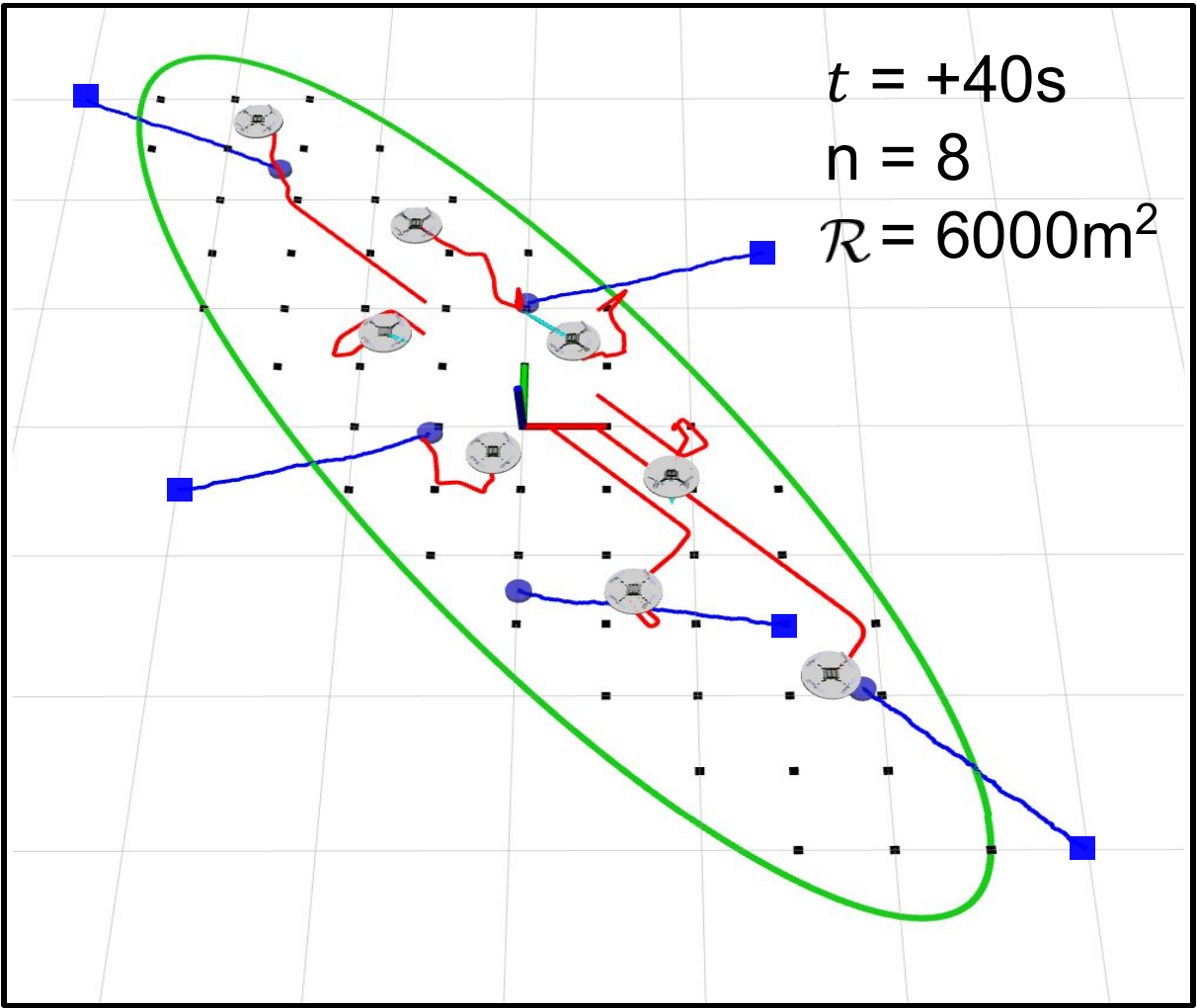}}

\subfigure[]
 	{\includegraphics[width=0.155\textwidth,trim={0.04cm 0.04cm 0.04cm 0.04cm}, clip]{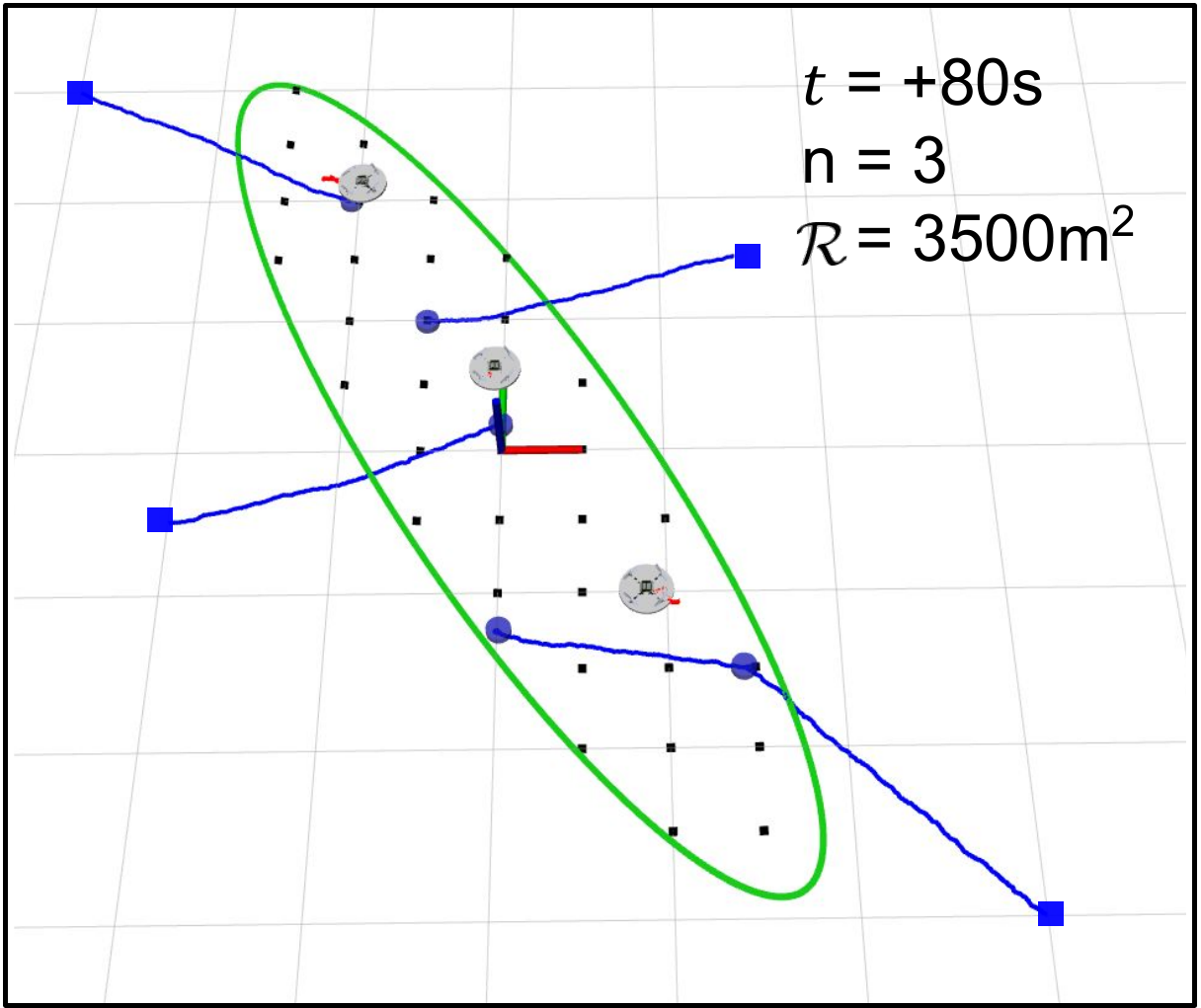}}
\subfigure[]
 	{\includegraphics[width=0.155\textwidth,trim={0.04cm 0.04cm 0.04cm 0.04cm}, clip]{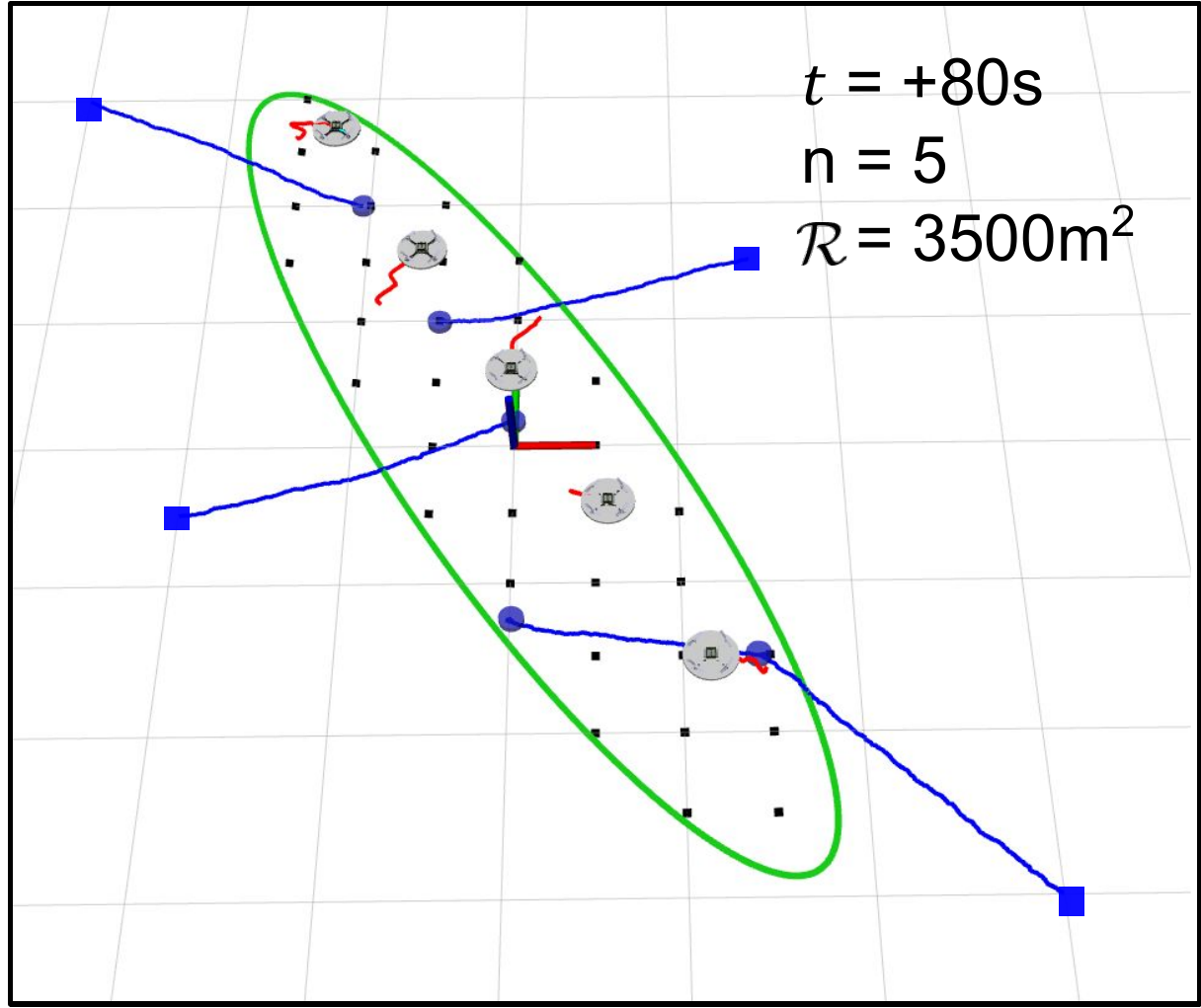}}
\subfigure[]
 	{\includegraphics[width=0.155\textwidth,trim={0.04cm 0.04cm 0.04cm 0.04cm}, clip]{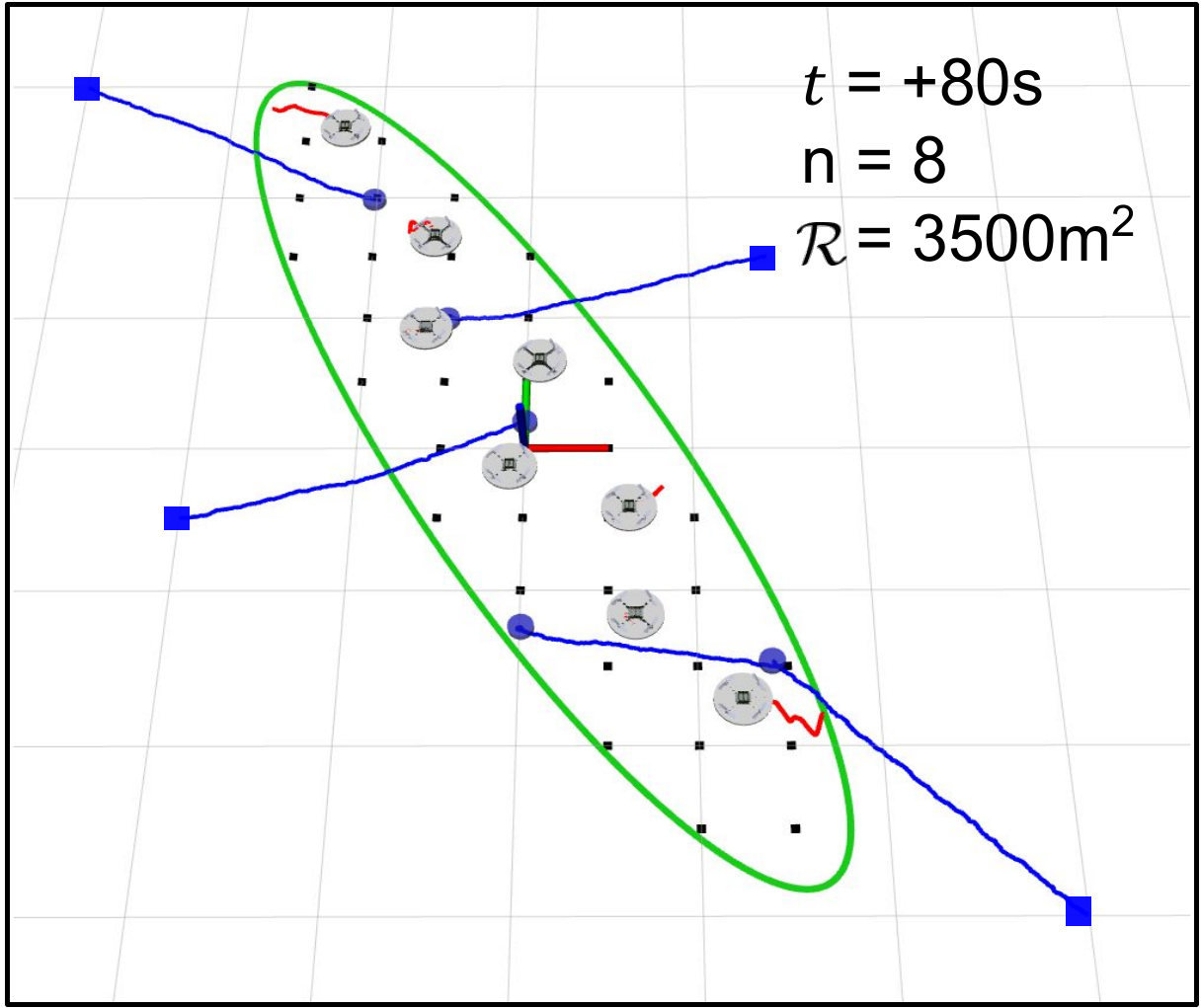}}
\caption{Trajectories of robot nodes with time. The top row shows the equilibrium communication topology for a stationary ROI and the initial UE positions (blue). Fig. (d)-(i) represents the UAV (red) and UE (blue) trajectories as the UE nodes move to their designated goal positions. The green ellipsoidal fence represents the concentration ellipsoid for the ROI $\mathcal{R}$. 
Note that the robot models are not to the scale for visualization purposes.}
 \label{fig:coco_images}
\end{figure}
Similarly, Fig. \ref{fig:coco_images} (d-i) show the UAV and UE trajectories for the moving ROI experiments.
The UEs travels between the start and arbitrarily chosen goal locations by morphing the ROI. 
The blue squares and dots represent the start and current locations of the UEs.
With the stage game optimization, the UAV swarm repeatedly move to maximize the coverage as the ROI changes. 
As the UE team members reach their destinations, the UAV swarm converge to the equilibrium formations as showed in Fig. \ref{fig:coco_images}(g-i).

 \begin{figure}[t!]
\centering
\subfigure[]
 	{\includegraphics[width = 0.241\textwidth, trim={0.1cm 0.2cm 0.8cm 0.4cm}, clip]{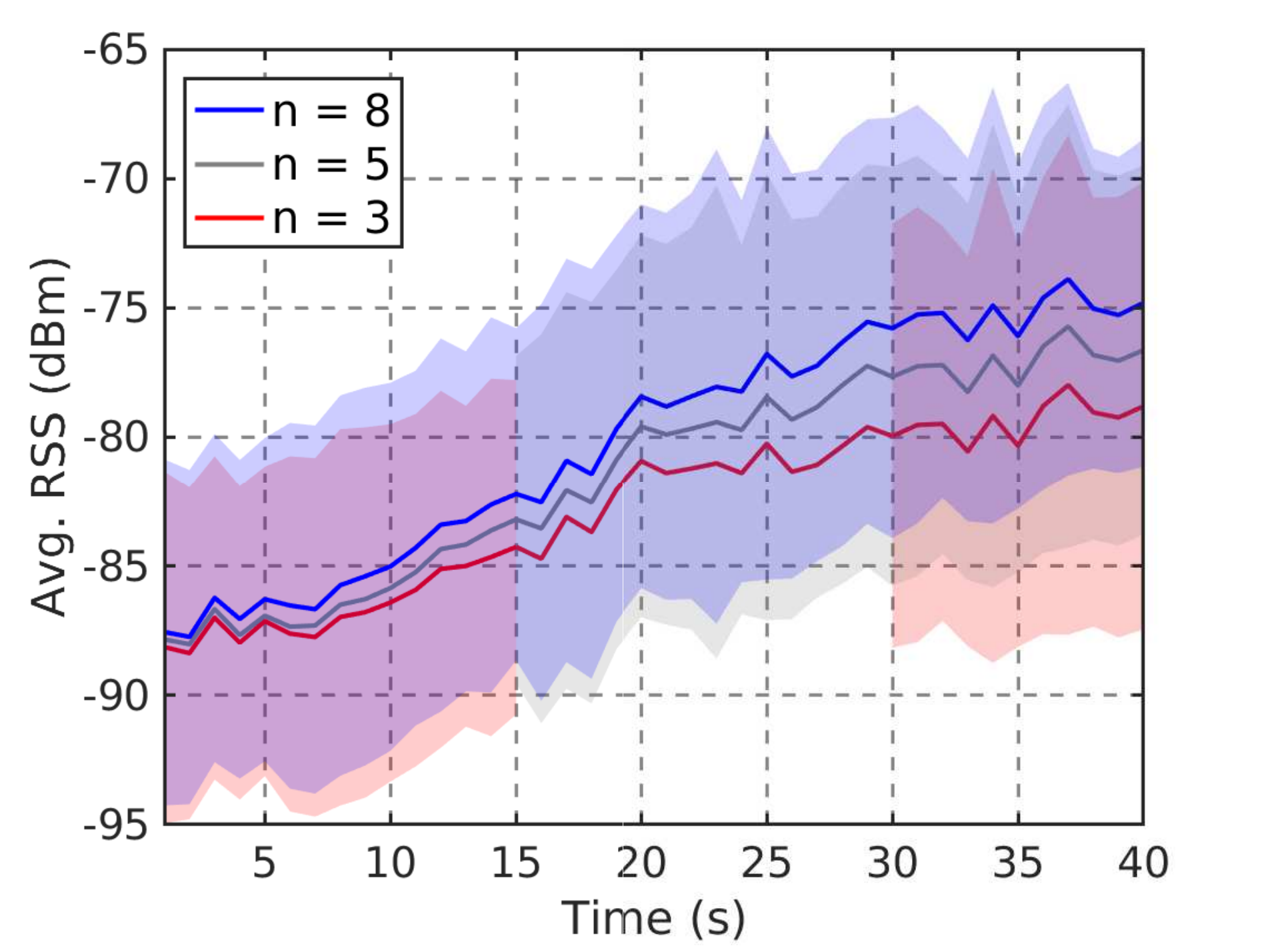}}
\subfigure[]
 	{\includegraphics[width = 0.241\textwidth, trim={0.1cm 0.1cm 0.8cm 0.4cm}, clip]{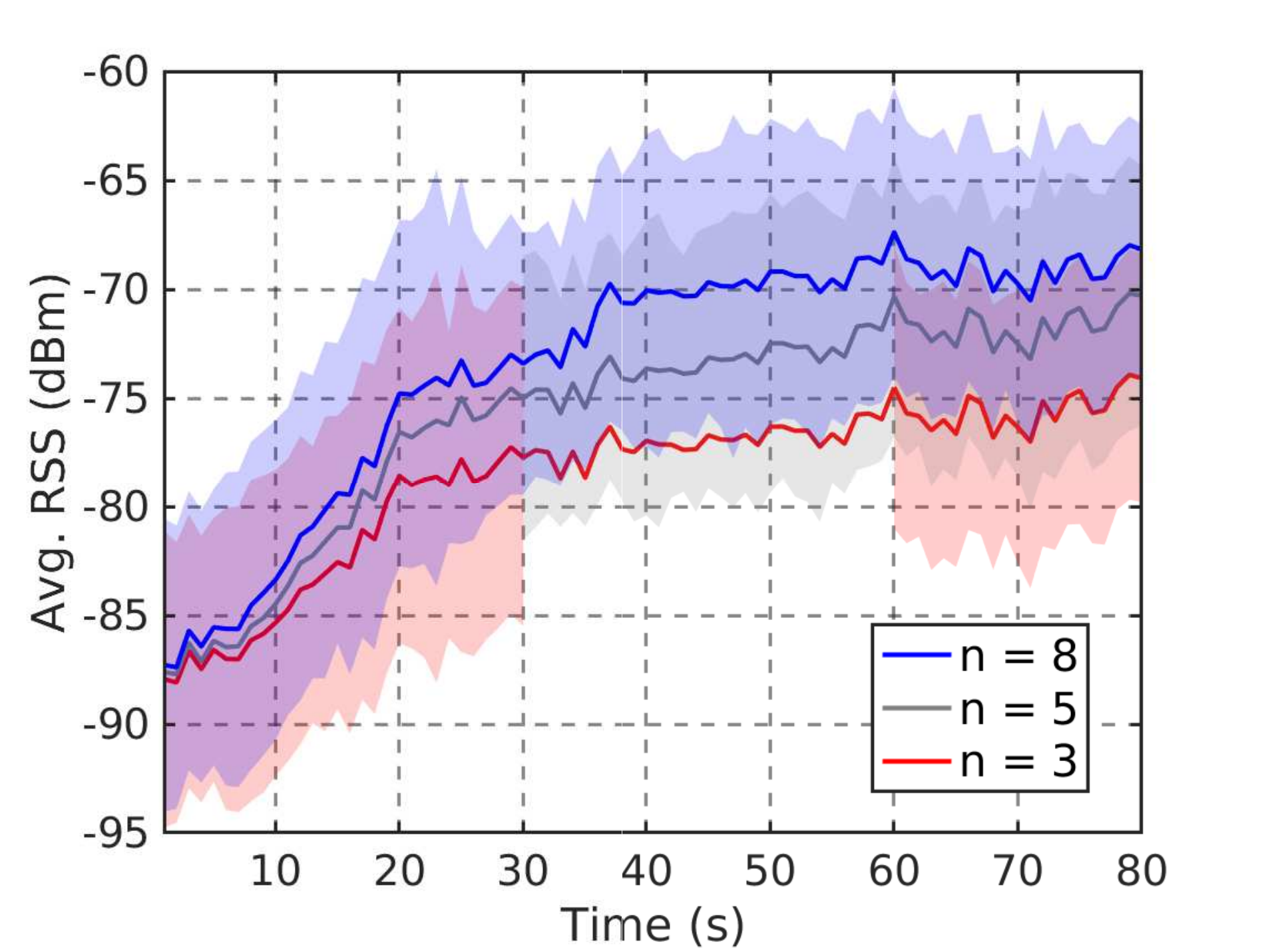}}
\caption{Change of the average RSS of UE nodes against time in stationary (a) and moving (b) scenarios for different UAV swarm sizes, using the CoCo model. For clarity, only the error for $n=8$ scenario is plotted throughout the experiments.}
 \label{fig:coco_plots}
  \vspace{-10pt}
\end{figure}
For clarity, only the most recent trajectory trails of the UAVs are showed.
We repeated each experiment for 20 trials using NS-3 to perform network events such as attenuating wireless signals, routing, and calculating RSS for each communication link.
In \ref{fig:coco_plots}(a-b), we present the average RSS as observed by a UE for for the experiments.
Throughout the convergence process, we observed that the UAVs to constantly increase their cooperative payoff, leading to higher RSS in the UEs.
We observed significant improvements in the average RSS as the UAVs reached the equilibrium formations under both scenarios.
Also, it's worth noting that decibel is a logarithmic scale, and every 3 unit increments double the signal strength.
The global connectivity of the network was maintained throughout the experiments, as the OLSR algorithm was able to find routes between the network nodes successfully.
The final communication topologies for the moving ROI experiments were observed to be more densely connected than the stationary scenarios due to the relatively small ROI size.

\rebut{We compare the network and optimization aspects of our framework to that of the widely used disk-based model and the decentralized, adaptive coverage method presented in \cite{schwager2009decentralized}.
Following our initial findings presented in Fig. \ref{fig:hops}(b), we selected the average single-hop distance $\approx 60$m as the radii for the disk model under similar environmental constraints.
We report the average observed RSS of a UE in Fig. \ref{fig:disk}(a-b) for the same configurations.
The disk-based models resulted in RSS gains initially. 
However, as the UAVs moved beyond each others' disk radii, the algorithm failed to reach a consensus formation.
Even though increasing the communication radii seems like a trivial solution, we emphasize that it can risk losing the global connectivity altogether due to the highly stochastic nature of the wireless signals at large distances.}

\rebut{
The decentralized, adaptive coverage control \cite{schwager2009decentralized} follows a similar approach to partition the environment based on the observed sensor gain for each robot.
For the comparison, we implemented the coverage as a bivariate Gaussian function defined at the center of $\mathcal{R}$ with covariance $\Sigma$.
The work, however, limits itself to static coverage functions and fails to handle the scenarios with dynamic ROIs and stochastic communication links.
In contrast to our unified framework, it also overlooks the robots' dynamics, thus the calculated paths needed to be further smoothed with trajectory optimization before deploying on the UAVs.
Fig. \ref{fig:disk}(c) shows the average RSS resulting in a UE using the decentralized, adaptive coverage control. 
Even though the approach converged to the consensus faster, it failed to yield higher RSS values in practice due to rigid connectivity and the overlooked dynamics.}
\begin{figure}[t]
\centering
\subfigure[]
 	{\includegraphics[width = 0.241\textwidth, trim={0.2cm 0.05cm 0.8cm 0.25cm}, clip]{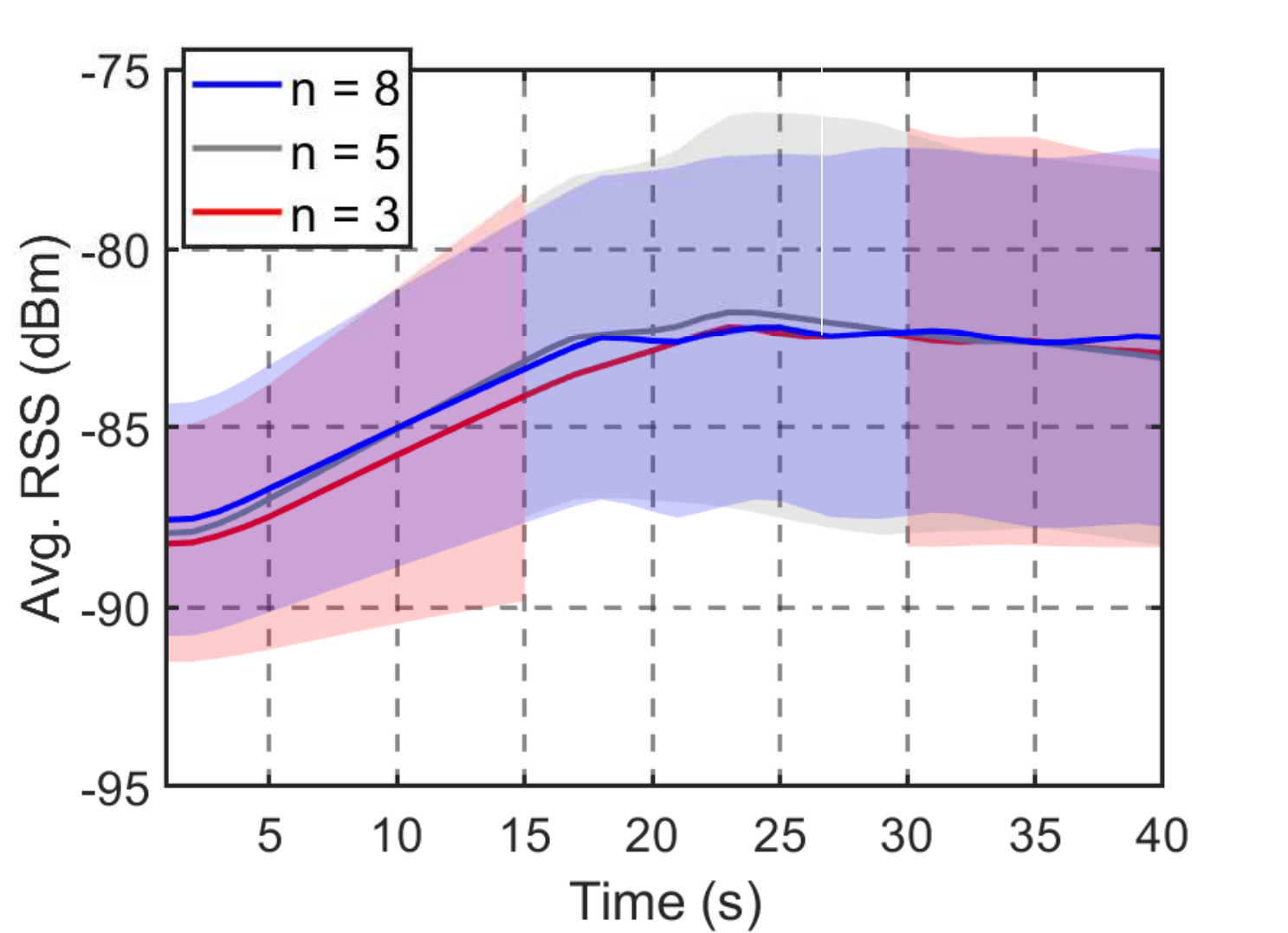}}
\subfigure[]
 	{\includegraphics[width = 0.241\textwidth, trim={0.2cm 0.05cm 0.8cm 0.25cm}, clip]{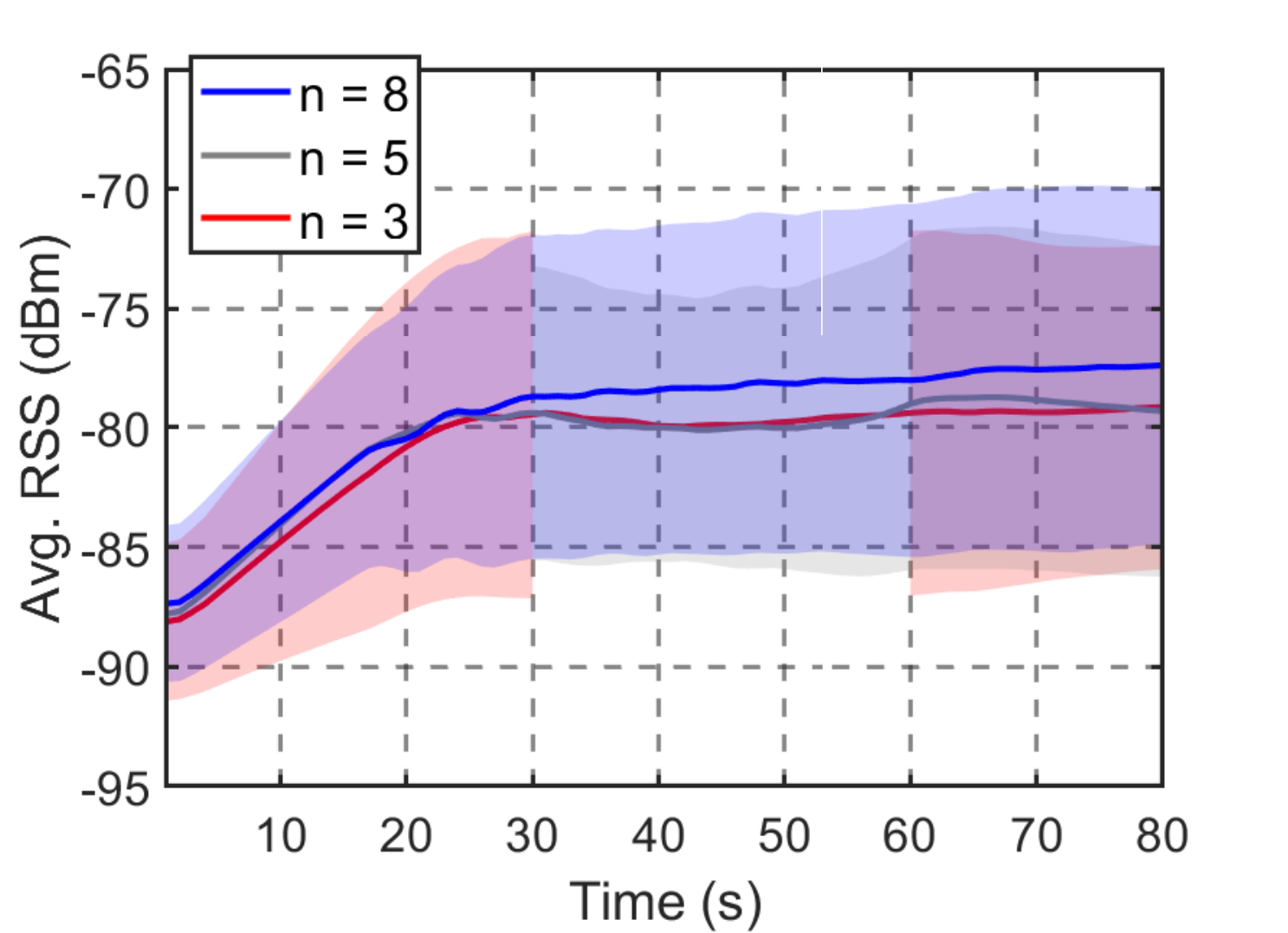}}
 \subfigure[]
 	{\includegraphics[width = 0.4\textwidth, trim={0cm 0.05cm 0.4cm 0.1cm}, clip]{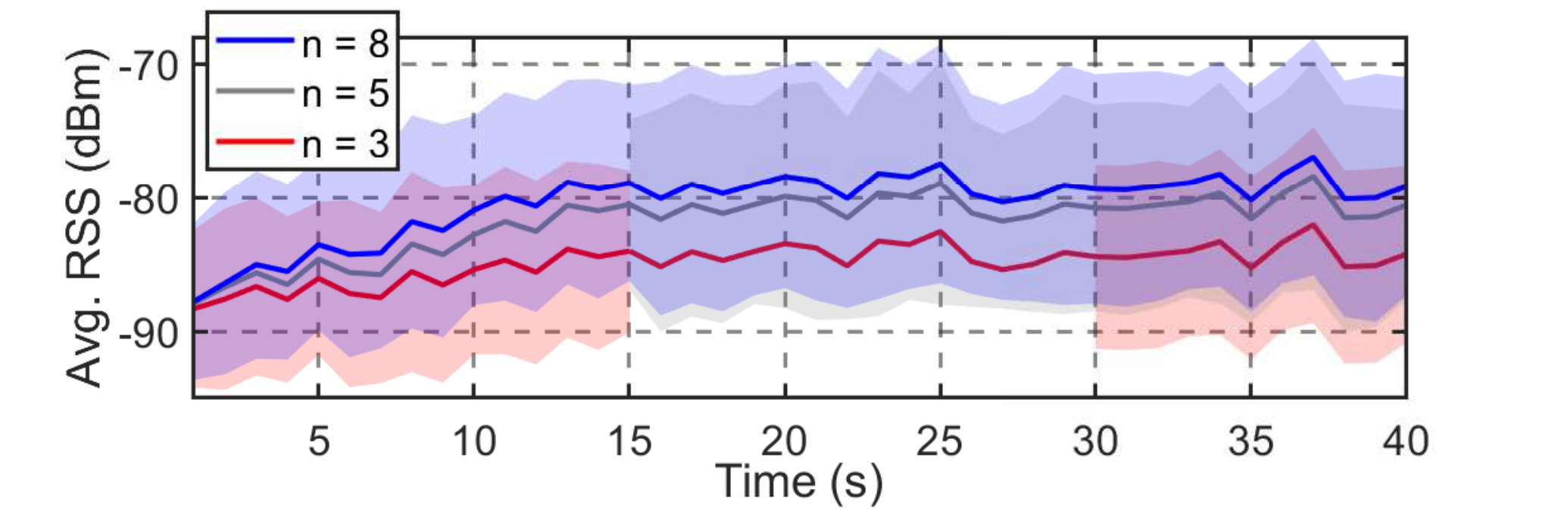}}
\caption{Comparisons: Average RSS of UE nodes in stationary (a) and moving ROI (b) scenarios for different UAV swarm sizes using a fixed-radii model. \rebut{(c) Average RSS plots for the adaptive coverage method in the stationary scenario.}}
 \label{fig:disk}
 \vspace{-10pt}
\end{figure}

\section{Conclusion}
\rebut{We have proposed a novel game-theoretic swarm coordination framework to achieve communication-aware coverage for robot swarms by only utilizing the local information.}
Our work complements the underlying network and robot dynamic models in neighborhood selection and control, resulting a robust coverage scheme for large-scale ROIs.
We have evaluated our approach in an ad-hoc mobile wireless network scenario to show that it can result in significant coverage gains while maintaining the local connectivity.
We have further established that our approach achieves real-time control and consensus for networked robot teams.
\bibliographystyle{ieeetr}
\bibliography{root}
\end{document}